\documentclass[11pt]{article} % For LaTeX2e
\pdfoutput=1

\usepackage{etoolbox}
\newtoggle{colt}
\togglefalse{colt}
% Formatting Macros
%!TEX root = colt/colt_main.tex
\usepackage[utf8]{inputenc}
\usepackage{mathrsfs}
\usepackage{amsmath, amsfonts, amssymb, graphicx}
\iftoggle{colt}
{
	\usepackage{times}
}
{
	\usepackage{amsthm,pdfpages}
	\usepackage{hyperref}
	\usepackage{cleveref}
	\hypersetup{colorlinks,citecolor=blue}
	\usepackage{fancyhdr, lastpage}
	\usepackage[vmargin=1.00in,hmargin=1.00in,centering,letterpaper]{geometry}
	\setlength{\headsep}{.10in}
	\setlength{\headheight}{15pt}
	\cfoot{}
	\lfoot{}
	\rfoot{\sc Page\ \thepage\ of\ \protect\pageref{LastPage}}

}

\usepackage{mathtools,verbatim}
\usepackage{enumitem}
\usepackage{natbib}
\usepackage{tikz}
\usetikzlibrary{bayesnet}
\usepackage[noend]{algpseudocode}
\usepackage[linesnumbered,ruled]{algorithm2e}

%%% magic code starts
%\mathcode`*=\string"8000
%\begingroup
%\catcode`*=\active
%\xdef*{\noexpand\textup{\string*}}
%\endgroup
%%% magic code ends % contains formatting under package geomtry
%!TEX root = main.tex

% from blake
\newcommand{\norm}[1]{\left\|#1\right\|}

\iftoggle{colt}
{}
{
	
}
\newcommand{\pars}[1]{\left(#1\right)}
\newcommand{\bracks}[1]{\left[#1\right]}

\newcommand{\calN}{\mathcal{N}}

\newcommand{\matU}{\mathbf{U}}

\newcommand{\calE}{\mathcal{E}}

%comments

%ceiling

\iftoggle{colt}
{
		\newtheorem{claim}[theorem]{Claim}

}
%theorems
{
	{
	 \theoremstyle{plain}
	      
	}
	\newtheorem{nono-theorem}{Theorem}[]
	\theoremstyle{plain}
	\newtheorem{theorem}{Theorem}[section]
	\newtheorem{claim}[theorem]{Claim}
	\newtheorem{lemma}[theorem]{Lemma}
	\newtheorem{corollary}[theorem]{Corollary}

	\newtheorem{remark}{Remark}

	\newtheorem{proposition}[theorem]{Proposition}
	\theoremstyle{definition}
	\newtheorem{definition}{Definition}[section]

}

%probability
\renewcommand{\Pr}{\mathbb{P}}
\newcommand{\Exp}{\mathbb{E}}

\newcommand{\nnz}{\#\textrm{nnz}}

\newcommand{\unifsim}{\overset{\mathrm{unif}}{\sim}}
\newcommand{\iidsim}{\overset{\mathrm{i.i.d.}}{\sim}}

%math operators

%norms
\newcommand{\op}{\mathrm{op}}

% logical

% \renewcommand{\iff}{\text{ iff }}
\newcommand{\matX}{\mathbf{X}}

\newcommand{\maty}{\mathbf{y}}

%Number Spaces

\newcommand{\N}{\mathbb{Z}}

\newcommand{\R}{\mathbb{R}}
\newcommand{\I}{\mathbb{I}}

%Vector Spaces
\newcommand{\PD}{\mathbb{S}_{++}^d}
\newcommand{\PDof}[1]{\mathbb{S}_{++}^{#1}}

\newcommand{\sphered}{\mathcal{S}^{d-1}}

\newcommand{\matW}{\mathbf{W}}
\newcommand{\matM}{\mathbf{M}}
\newcommand{\gap}{\mathrm{gap}}
\newcommand{\xhat}{\widehat{x}}
\newcommand{\vhat}{\widehat{v}}
\newcommand{\lamhat}{\widehat{\lambda}}
\renewcommand{\N}{\mathbb{N}}

\newcommand{\poly}{\mathrm{poly}}
\newcommand{\calD}{\mathcal{D}}

\newcommand{\cond}{\mathrm{cond}}

\newcommand{\Mclass}{\mathscr{M}_d}

\newcommand{\Sym}{\mathbb{S}}
\newcommand{\Sympl}{\mathbb{S}_{++}}

\newcommand{\BigOhSt}[1]{\BigOm^*\left({#1}\right)}
% Asymptotics
\DeclareMathOperator{\BigOm}{\mathcal{O}}
\newcommand{\BigOhPar}[2]{\BigOm_{#1}\left({#2}\right)}

\newcommand{\BigOh}[1]{\BigOm\left({#1}\right)}
\DeclareMathOperator{\BigOmtil}{\widetilde{\mathcal{O}}}
\newcommand{\BigOhTil}[1]{\BigOmtil\left({#1}\right)}
\DeclareMathOperator{\BigTm}{\Theta}
\newcommand{\BigTheta}[1]{\BigTm\left({#1}\right)}
\DeclareMathOperator{\BigWm}{\Omega}

\newcommand{\BigOmega}[1]{\BigWm\left({#1}\right)}

\newcommand{\BigOmegaTil}[1]{\widetilde{\BigWm}\left({#1}\right)}

\newcommand\smallO{
  \mathchoice
    {{\scriptstyle\mathcal{O}}}% \displaystyle
    {{\scriptstyle\mathcal{O}}}% \textstyle
    {{\scriptscriptstyle\mathcal{O}}}% \scriptstyle
    {\scalebox{.7}{$\scriptscriptstyle\mathcal{O}$}}%\scriptscriptstyle
  }

\newcommand{\LittleOm}{\smallO}
\newcommand{\LittleOh}[1]{\LittleOm\left({#1}\right)}

\DeclareMathOperator*{\argmax}{arg\,max}

\newcommand{\Alg}{\mathsf{Alg}}

\newcommand{\vi}{\matv^{(i)}}

% Complexity Classes

 % additional macros

%\title{$\widetilde{\Omega}(d)$ queries is necessary:  Matrix-Vector Query Lower Bounds for Numerical Linear Algebra}

\title{The gradient complexity of linear regression}

\author{{Mark Braverman\thanks{Princeton University. \url{mbraverm@cs.princeton.edu}}}\and { Elad Hazan\thanks{Princeton University and Google AI Princeton. \url{ehazan@cs.princeton.edu}}} \and Max Simchowitz\thanks{UC Berkeley. \url{msimchow@berkeley.edu}} \and Blake Woodworth\thanks{Toyota Technological Institute at Chicago. Work done while visiting Google AI Princeton. \url{blake@ttic.edu}}}

\begin{document}
\begin{titlepage}
\maketitle{}
\begin{abstract}
   %!TEX root = main.tex

We investigate the computational complexity of several basic linear algebra primitives, including largest eigenvector computation and linear regression, in the computational model that allows access to the data via a matrix-vector product oracle. We show that for polynomial accuracy, $\Theta(d)$ calls to the oracle are necessary and sufficient even for a randomized algorithm. 

Our lower bound is based on a reduction to estimating the least eigenvalue of a random Wishart matrix. This simple distribution enables a concise proof, leveraging a few key properties of the random Wishart ensemble.
%\mscomment{for STOC we should elaborate}
 %, specifically the  Tracy-Widom law. % yields a concise and elementary proof of near-optimal lower bounds for the query complexity of eigenvalue estimation, eigenvector approximation, and minimization of quadratic functions. 
%\input{abstract}

\end{abstract}
\end{titlepage}

%!TEX root = ./colt/colt_main.tex
\section{Introduction}

Solving linear systems and computing eigenvectors are fundamental problems in numerical linear algebra, and have widespread applications in numerous scientific, mathematical, and computational fields. Due to their simplicity, parallelizability, and limited computational overhead, first-order methods based on iterative gradient updates have become increasingly popular for solving these problems. Moreover, in many settings, the complexity of these methods is currently well understood: tight upper and lower bounds are known for gradient methods, accelerated gradient methods and related algorithms. 

\paragraph{First order methods for regression and eigenvector computation. }
As an example, consider the problem of computing the largest eigenvector for a given matrix $M \in \R^{d\times d}$. The power method finds an $\epsilon$-approximate solution in $\BigOh{\frac{\log d }{\epsilon}}$ iterations, each involving a matrix-vector product that can be computed in time proportional to the number of non-zeros in the matrix. A variant of the Lanczos algorithm improves this complexity to $\BigOh{\frac{\log d}{\sqrt{\epsilon}}}$ \citep{kuczynski1992estimating,musco2015randomized}.  Alternatively, if the matrix has an inverse-eigengap $\frac{\lambda_1(M)}{\lambda_1(M) - \lambda_2(M)}$ bounded by $\kappa$, the above running times can be improved to $\BigOh{\kappa \log \frac{d}{\epsilon}}$ and $\BigOh{\sqrt{\kappa}\log \frac{d}{\epsilon}}$.  In a low accuracy regime, where $\epsilon \gg 1/d$, these upper bounds attained by Lanczos are known to be information-theoretically tight in the number of matrix-vector products required to compute a solution to the given precision \citep{simchowitz2018tight}. The optimal complexities are nearly identical for solving linear systems, except that these do not incur a $\log d$ dependence on the ambient dimension $d$ \citep{simchowitz2018randomized}. More generally, these upper and lower bounds extend to convex optimization with first order methods more broadly \citep{nemirovskii1983problem}.

\paragraph{The blessing of data sparsity. } One major advantage of first order methods is that they benefit from \emph{sparsity}. Each iteration of first order methods computes $\BigOh{1}$ matrix-vector multiplies, and if the matrix in question has $\nnz$ non-zero entries, then these multiplications can be performed in $\BigOh{\nnz}$-time. This yields runtimes which scale with the sparsity of the problem instance, rather than the ambient dimension (which can be quadratically worse). 

%The situation for linear regression and various other regression problems is completely analogous in terms of number of first-order iterations required for an approximate solution. 

\begin{table}[htp]
\begin{center}
\begin{tabular}{|c|c|c|}
\hline
Regime of Dominance &  Running time & Method \\
\hline
\hline
$ \epsilon \ge 1/\text{poly}(d)$ & $ \frac{1}{\sqrt{\epsilon}} \times (\nnz )$ & Lanczos, CG, AGD \\
\hline
$ \kappa < \frac{1}{\epsilon}$ & $ \sqrt{\kappa} \log  \frac{1}{{\epsilon}} \times (\nnz) $ & Lanczos, CG, AGD \\
\hline
$ \kappa ,\epsilon^{-1} \ge d^2$ & $ d \times (\nnz)$ & Lanczos \& CG (not AGD) \\
\hline
$ \nnz = d^{2}$ (ties with above) & $  d^{3}$ & matrix inversion (naive) \\
\hline
$ \nnz \ge  d^{\omega - 1}$ & $  d^{\omega} + d^2  \log \frac{1}{\epsilon}  $ & matrix inversion (state of art) \\
\hline
\end{tabular}
\end{center}
\caption{Methods for computing the largest eigenvector of $A \in R^{d \times d}$, and solving  linear systems with $d$-data points, equivalent to computing $A^{-1}b$ for $b \in R^d$. Here $\nnz$ denotes the number of nonzero entries of $A$, $\kappa$ refers to an upper bound the condition number (in least squares) or eigengap (in PCA). For eigenvalue problems, above runtimes suppress $\log d$-dependence. Lanczos refers to the block Lanczos methods \citep{musco2015randomized}, CG to the conjugate gradient methods ( e.g. \cite{trefethen1997numerical}), AGD to accelerated gradient descent. We use 'naive' matrix elimination refers to approaches such as those based on Gaussian elimination; `state of art' matrix inversion denotes the theoretically state-of-art approach due to \cite{williams2012multiplying}, which enjoys exponent $\omega \approx 2.3727$. Logarithmic factors associated with iteration complexity are included; logarithmic factors associated with numerical precision are suppressed.} 
\label{default}
\end{table}%

%,  

%In this paper we investigate the computational complexity of these problems in the high-accuracy regime, i.e. when the desired accuracy is inverse polynomial in the problem dimensions.  In this regime, our understanding of even these most basic problems is poorly understood, and related to the most intriguing open problems in algorithm design. The best known algorithms for $\epsilon$-approximation, where $\epsilon$ is inverse-polynomial in the dimension, are based on matrix inversion, and thus cannot exploit sparsity of the input. 

%A natural question that arises is thus: \\
%\textbf{Is there an algorithm based on matrix-vector products, whose number of iterations is poly-logarithmic in the approximation guarantee, and sublinear in the dimension?}

\paragraph{The high accuracy regime.} What is the computational complexity of obtaining high, inverse polynomial  $\epsilon = \frac{1}{\poly(d)}$ precision using a randomized algorithm, without a bound on the condition number or eigengap of the matrix? 
This is precise the gap in the literature that our results address. %: the computational complexity of these problems in the high-accuracy regime, i.e.
%when the desired accuracy is inverse polynomial in the problem dimensions.

In this regime, our understanding of even these most basic problems is poor by comparison. The best known algorithms for $\epsilon$-approximation in this regime scale as $\BigOh{d^\omega + d^2 \log \frac{1}{\epsilon}}$, where $\omega$ is the matrix inversion constant, currently around $2.37$. These methods proceed by attempting to invert the matrix in question. Since the inverse of a sparse matrix is itself not necessarily sparse, these methods do not take advantage of data sparsity.

It is thus natural to ask if there is a randomized algorithm based on gradient-like queries that can exploit data sparsity, even for the simplest of linear algebra problems sketched above. We note that such faster algorithms would not necessarily require an inverse of the entire matrix, and therefore would not imply faster matrix inversion. 
\paragraph{Our results.} Our main result shows that first-order methods based on gradient-like queries cannot significantly surpass the performance of the existing conjugate gradient or Lanczos methods, even for the simplest of linear algebra problems sketched above, and crucially, \emph{even in the high-accuracy regime.}

In a computation model where each iteration corresponds to one query of a matrix-vector product, we show that $\Omega(d)$ matrix-vector product oracle queries are necessary to obtain a $1/d^2$-accurate approximation to the largest eigenvector. This is tight, as $d$ such queries are sufficient for the Lanczos method to obtain an exact solution (up to machine precision). Similarly, we show a lower bound of $\BigOmegaTil{d}$ queries for solving linear systems, which nearly matches the $d$-query upper bound of the conjugate gradient method. 

Moreover, for instances with $\nnz(A) = \Theta(s^2)$ nonzero entries. we show a lower bound of $\Omega(s)$ queries necessary for high-precision eigenvalue approximation, and $\BigOmegaTil{s}$ for solving linear systems. This suggests an overall computational complexity of $\Omega(s^3)$ for first order methods. This in turn demonstrates that algebraic methods based on matrix inversion asymptotically outperform optimization-based approaches in the regime $s \ge d^{\omega/3}$. 

Finally, our lower bounds are constructed so that the instance sparsity $s$ encodes the eigengap (resp. condition number)  parameters for eigenvector approximation (resp. least squares). In turn, these parameters can in turn be used to encode target accuracy $\epsilon$ in the low-accuracy regime. When translated in terms of these parameters, our guarantees are near-optimal up to logarithmic factors in terms of both eigengap/condition number and accuracy.  In contrast to much existing work, our lower bounds are \emph{information theoretic}, and apply to randomized algorithms, even those that do not satisfy Krylov restrictions. To our knowledge, this is the first work that provides lower bounds which apply to general randomized algorithms, and attain optimal dimension-dependence in the high accuracy regime when $\epsilon \ll 1/\poly(d)$. For a thorough discussion of the prior art, see our discussion of related work below.

\paragraph{Randomized algorithms}
Our work establishes lower bounds for \emph{randomized} algorithms. These are more interesting than lower bounds for deterministic algorithms for several reasons. Of course, the former are stronger and more widely applicable than the latter. More importantly, there are problems for which randomized algorithms can outperform deterministic algorithms enormously, for instance, the only polynomial time algorithms for volume computation are randomized \citep{lovasz2006simulated}. 

Lastly, the linear algebraic problems we consider are of great use in machine learning problems, which are frequently tackled using randomized approaches in order to avoid poor dimensional dependencies. As an example, randomized matrix sketching algorithms can substantially reduce the complexity of PCA or SVD for very large matrices. For instance, computing the top-$k$ singular vectors of a $d\times{}d$ matrix requires $kd^2$ time for traditional (deterministic) iterative methods, but can be reduced to $d^2 + dk^2/\epsilon^4$ using a randomized sketching approach \citep{clarkson2017low}, which can be much better for moderate $\epsilon$.

%\footnote{Lower bounds with optimal dependence on the condition numbers and eigengaps were established in\cite{simchowitz2018tight} and \cite{simchowitz2018randomized}. However, these works require the dimension to be polynomially large in the condition number, obviating an w}

\paragraph{Related Work.} There is an extensive literature on algorithms for linear algebraic tasks such as solving linear systems and computing eigenvalues and eigenvectors, see for example the survey of \cite{sachdeva2014faster}. In the interest of brevity, we focus on the relevant lower bounds literature.  

The seminal work of \cite{nemirovskii1983problem} establishes lower bounds which apply only to \emph{deterministic} algorithms. These first order lower bounds enjoy essentially optimal dependence all relevant problem parameters, including dimension. However, these constructions are based on a so-called resisting oracle, and therefore \emph{do not} extend to  the randomized algorithms considered in this work. 

For randomized algorithms, the lower bounds of \cite{simchowitz2018tight} and \cite{simchowitz2018randomized} yield optimal dependence on the eigengap and condition number parameters. However, these bounds require the dimension to be polynomial large in these parameters, which translates into a suboptimal dimension-dependent lower bound of $\BigOmegaTil{d^{1/3}}$. 

A series of papers due to \citet{woodworth2016tight,woodworth2017lower} prove lower bounds for first order convex optimization algorithms which obtain optimal dependence on relevant parameters, but hold only in high dimensions. Furthermore, they are based on intricate, non-quadratic convex objectives which can effectively ``hide'' information in a way that linear algebraic instances cannot. Thus, they do not apply to the natural linear algebraic constructions that we consider. For high dimensional/low accuracy problems, there are also lower bounds for randomized algorithms that use higher order derivatives, see e.g. \citep{agarwal2017lower}. These, like the previously mentioned lower bounds, also only apply in high dimensions and imply dimension-dependent lower bounds like $\BigOmegaTil{d^{1/3}}$.

Finally, in concurrent, related work, \cite{sun2019querying} study numerous other linear algebraic primitives in the same matrix-vector product oracle setting. They use a similar approach to proving lower bounds for other problems and randomized algorithms, but do not address the fundamental problems of maximum eigenvalue computation and linear regression as we do.

% \mscomment{Elad: I left all the previous writing as comments since it seemed to repeat the dicussion in the intro. Please feel free to put anything back in that you thing would be instructive. But I think much of that discussion makes more sense as introduction material and motivation than as related work material}

% \mscomment{Given our contributions relative to past work (Nemirovskii impliesall of this stuff, and dimension dependence for deterministic), we should probably justify why randomized is interesting. }

\paragraph{Proof Techniques.}
One of the greatest strengths of our results is the simplicity of their proofs. In general, establishing query lower bounds which apply to randomized algorithms requires great care to control the amount of information accumulated by arbitrary, randomized, adaptive queries. Currently, the two dominant approaches are either (a) to construct complex problem instances that obfuscate information from any sequence of queries made \citep{woodworth2016tight}, or (b) reduce the problem to estimating of some hidden component \citep{simchowitz2018tight,simchowitz2018randomized}. The constructions for approach (a) are typically quite intricate, require high dimensions, and do not extend to linear algebraic problems. Approach (b) requires sophisticated information theoretic tools to control the rate at which information is accumulated. 

In contrast, our work leverages simple problems of a classic random matrix ensemble known as the \emph{Wishart} distribution \cite{anderson2010introduction}. 
In particular, our lower bound for maximum eigenvalue computation is witnessed by a very natural instance $\matM = \matW\matW^\top$ where the entries of $\matW$ are i.i.d.~Gaussian. This is plausibly a very benign instance as it is one of the simplest distributions over symmetric positive definite matrices that one might think of. 

The simplicity of the problem instance, and existing understanding of the distribution of the spectrum of Wishart matrices allows for concise, straightforward proofs.

 %(a) optimal in their dimension dependence (b) apply to the cannonical linear algebra (as opposed to )

%Nevertheless, the regime we focus on is not yet understood. For the task of solving linear systems, numerous first-order methods have been studied including conjugate gradient descent and accelerated gradient descent. These methods are both optimal in terms of their $O\left(\sqrt{\kappa}\log \frac{1}{\epsilon}\right)$ dependence on the condition number of the Hessian matrix, but we are studying here a setting in which the condition number is unbounded. Accelerated gradient descent is also optimal in the low-accuracy regime where its complexity does not depend on the condition number, but scales instead with $O\left(1/\sqrt{\epsilon}\right)$. However, we focus on a high accuracy  ($\epsilon \leq O\left(1/\poly(d)\right)$) regime where this dependence is poor.

%Another approach to solving linear systems is through direct inversion of the matrix $A$ using a fast matrix inversion method such as the Strassen algorithm or its variants/improvements, which requires time $O\left(d^\omega + d^2 \log\left(\frac{1}{\epsilon}\right)\right)$ where the current best known $\omega \approx 2.37$. The main downside of this approach, as opposed to gradient methods, is that it does not exploit sparsity. Hence their running time can be super-quadratic in the input size. 

%In terms of known lower bounds... \eh{now cite Blake's papers and Max papers and explain} 

%\bw{TODO: discussion of Max's lower bound esp. how it was for large $\epsilon$ and we are looking at small $\epsilon$}

\subsection{Notation} Let $\sphered := \{x \in \R^d:\|x\|_2 = 1\}$, $\Sym^d := \{M \in \R^{d \times d} : M = M^\top\}$ and $\Sympl^d := \{A \in \Sym^d: A \succ 0\}$. As a general rule, we use $M$ for matrices which arise in eigenvector problems, and $A$ for matrices which arise in least-squares problems.  For $M \in \Sym^d$, we let $\gap(M) := \frac{\lambda_1(M) - \lambda_2(M)}{\lambda_1(M)}$, and for $A \in \Sympl^d$, we set $\cond(A) := \frac{\lambda_1(A)}{\lambda_d(A)}$. We adopt the conventional notions $\BigOh{\cdot},\BigOmega{\cdot},\BigTheta{\cdot}$ as suppressing universal constants independent of dimension and problem parameters, let $\BigOhTil{\cdot},\BigOmegaTil{\cdot}$ suppress logarithmic factors, and let $g(x) = \BigOhSt{f(x)}$ denote a term which satisfies $g(x) \le c f(x)$ for a particular, unspecified, but sufficiently small constant $c$. We say a matrix is $s$ sparse if its number of nonzero entries is at most $s$. %Given a probability operator $\Pr$, we include $\Alg$ in the subsequent to denote that the probability is take over the random coins of $\Alg$, in addition to any other distributions specified. 
%!TEX root = main.tex

\newcommand{\condpar}{\mathtt{cond}}
\newcommand{\AlgEig}{\Alg_{\mathrm{eig}}}
\newcommand{\Quer}{\mathrm{Query}}

\newcommand{\gappar}{\mathtt{gap}}
\newcommand{\wi}{\mathsf{w}^{(i)}}
\renewcommand{\vi}{\mathsf{v}^{(i)}}
\newcommand{\EEA}{\mathrm{EigValueAlg}}
\newcommand{\EAA}{\mathrm{EigVecAlg}}
\newcommand{\QAA}{\mathrm{LinSysAlg}}

\newcommand{\cgap}{c_{\mathrm{gap}}}
\newcommand{\cquery}{c_1}
\newcommand{\cacc}{c_2}
\newcommand{\cprob}{c_{\mathrm{prob}}}
\newcommand{\ctopeig}{c_{\mathrm{eig}}}
\newcommand{\ccond}{c_{\mathrm{cond}}}
\newcommand{\dbar}{d_*}
\newcommand{\Mclassdbar}{\mathscr{M}_{\dbar}}

\section{Main Results}

We begin the section by stating a lower bound for the problem of eigenvalue estimation and eigenvector approximation via matrix-vector multiply queries. Via a standard reduction, this bound will imply a lower bound for solving linear systems via gradient-queries.  

We stress that, unlike prior lower bounds, our bounds for eigenvalue problems (resp. linear systems) both apply to arbitrary, randomized algorithms, \emph{and} capture the correct dependence on the eigengap (resp. condition number), all the way up to a $\Omega(d)$ (resp. $\widetilde{\Omega}(d)$) worst-case lower bound in $d$ dimensions. This worst-case lower bound is matched by the Lanczos \citep{musco2015randomized} and Conjugate Gradient methods (see, e.g.~\cite{trefethen1997numerical}), which, assuming infinite precision, efficiently recover the exact optimal solutions in at most $d$ queries. 

\subsection{Eigenvalue Problems}
Before introducing our results, we formalize the query model against which our lower bounds hold:
\begin{definition}[Eigenvalue and Eigenvector Algorithms] An eigenvalue approximation algorithm, or $\EEA$, is an algorithm $\Alg$ which interacts with an unknown matrix $M \in \PD $ via $T$ adaptive, randomized queries, $\wi = M\vi$, and returns an estimate $\lamhat$ of $\lambda_1(M)$.  An eigenvector approximation algorithm, or $\EAA$, operates in the same query model, but instead returns an estimate $\vhat \in \sphered$ of $v_1(M)$. We call $T := \Quer(\Alg)$ the \emph{query complexity} of $\Alg$. 
\end{definition}

We let $\Pr_{\matM\sim \calD,\Alg}$ denote the probability induced by running $\Alg$ when the input is a random instance $\matM$ drawn from a distribution $\calD$.
We now state our main query lower bound for $\EEA$'s, which we prove in Section~\ref{sec:proof_main_eig}. Our lower bound considers a distribution over symmetric matrices $M$ which are also PSD, to show that our lower bounds hold even under the most benign, and restrictive conditions:\footnote{Note that a lower bound on PSD matrices holds a fortiori for arbitrary symmetric and square matrices, since PSD are a subclass.}

% \eh{how about replacing the $c_...$ constants with $\Omega$ notation in the theorem below? }
\begin{theorem}[Lower Bound for Eigenvalue Estimation]\label{thm:main_eig_lb} There is a function  $d_0 :(0,1) \to \N$ such that the following holds. For any $\beta \in (0,1)$, ambient dimension $d \ge d_0(\beta)$, and every sparsity level $s \in [d_0(\beta),d]$, there exists a distribution $\calD = \calD(s,d,\beta)$ supported on $s^2$-sparse matrices in $\PDof{d}$ such that any $\EEA$ $\Alg$ with $\Quer(\Alg) \le (1-\beta)s$ satisfies
\begin{align*}
\Pr_{\matM \sim \calD,\Alg}\left[|\lamhat - \lambda_1(\matM)| \ge \frac{1}{20s^2}\right] \ge \Omega(\sqrt{\beta})
\end{align*}
Moreover, $\matM \sim \calD$ satisfies $\gap(\matM) \ge \Omega_{\beta}(1)/s^2$, and  $1 - \Omega_{\beta}(1) /s^2 \le \lambda_1(\matM) \le 1$ almost surely. Here, $\Omega_{\beta}(1)$ denotes a quantity lower bounded by a function of $\beta$, but not on $s$ or $d$. 
\end{theorem}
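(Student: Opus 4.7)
The plan is to construct a hard distribution based on square Wishart matrices and reduce the query lower bound to anti-concentration of the smallest Wishart eigenvalue, in line with the proof strategy advertised in the introduction.

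\textbf{Construction.} I would sample $W \in \R^{s \times s}$ with i.i.d.\ $\calN(0,1)$ entries, conditioned on the Bai--Yin event $\{\lambda_{\max}(W^\top W) \le 4s + O(\sqrt{s})\}$ (probability $1-o(1)$), and set $M_0 := I_s - \tfrac{1}{4s} W^\top W \succeq 0$. I then embed $M_0$ as an $s^2$-sparse block on a fixed $s$-subset of coordinates in $\R^{d\times d}$ (padding the complement by a negligible multiple of $I_d$ to obtain strict positive-definiteness when sparsity permits). Then $\lambda_1(M) = 1 - \lambda_{\min}(W^\top W)/(4s)$, and Edelman's limiting density for $s\cdot \lambda_{\min}(W^\top W)$, together with standard level-repulsion estimates between the two smallest Wishart eigenvalues, directly yield $\lambda_1(M)\in [1-\Omega_\beta(1)/s^2,\,1]$ and $\gap(M) \ge \Omega_\beta(1)/s^2$ on a good event of probability $\ge 1-O(\sqrt\beta)$.

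\textbf{Reduction to a Wishart sub-problem.} By Yao's minimax principle, I would fix an arbitrary deterministic $\Alg$ with $T := (1-\beta)s$ matrix-vector queries. Since $M$ vanishes outside the sparse block, the effective queries are against $W^\top W$ restricted to the $s$-dimensional block, so it suffices to show no $T$-query algorithm can estimate $\lambda_{\min}(W^\top W)$ to within $O(1/s)$. Orthonormalizing queries only adds information, so I assume $v_1,\dots,v_T$ are orthonormal; let $V=[v_1,\ldots,v_T]$ with orthonormal completion $V_\perp$. The right-rotational invariance of $W$'s distribution lets me treat $W_V := WV$ and $W_\perp := WV_\perp$ as independent Gaussian blocks of sizes $s\times T$ and $s\times (s-T)$ (the adaptivity of $V$ is absorbed because the distribution of $W$ is invariant under such rotations). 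A direct Gaussian-conditioning computation then gives, in the $[V|V_\perp]$ basis,
\[
W^\top W \;=\; \begin{pmatrix} A & B \\ B^\top & B^\top A^{-1}B + W_{\mathrm{hid}}\end{pmatrix},
\]
where $A := W_V^\top W_V$ and $B := W_V^\top W_\perp$ are revealed by the queries while $W_{\mathrm{hid}}$ is a \emph{fresh} $\beta s\times \beta s$ square Wishart matrix, independent of the query transcript. A Schur-complement reduction then identifies $\lambda_{\min}(W^\top W) = \lambda_{\min}(R^{-1/2}W_{\mathrm{hid}}R^{-1/2})$ with $R := I + B^\top A^{-2}B \succeq I$.

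\textbf{Anti-concentration and conclusion.} I would combine two ingredients. First, $\lambda_{\max}(R) \le C_\beta = O_\beta(1)$ with probability $1-O(\sqrt\beta)$: standard Gaussian concentration gives $\lambda_{\min}(A) \ge c_\beta s$ (since $W_V$ has fixed aspect ratio $(1-\beta)<1$) and $\|B\|_{\op} \le C_\beta'\sqrt{s}$, so $\|B^\top A^{-2} B\|_{\op} = O_\beta(1/s)$. Second, Edelman's density formula gives an upper bound $O(1/\sqrt{x})$ on the density of $\beta s\cdot \lambda_{\min}(W_{\mathrm{hid}})$ near the origin, hence $\Pr[\lambda_{\min}(W_{\mathrm{hid}})\in[a,a+\delta]] \le O(\sqrt{\delta\,\beta s})$ uniformly in $a$. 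Together these imply that, conditional on any realization of $(A,B)$ in the good event, no estimator $\widehat\lambda$ can satisfy $|\widehat\lambda - \lambda_1(M)| \le 1/(20 s^2)$ except with probability at most $1-\Omega(\sqrt\beta)$; averaging over $W$ and invoking Yao's principle delivers the claimed bound.

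\textbf{Main obstacle.} The most delicate step will be the block decomposition of $W^\top W$ under adaptive queries: one must simultaneously argue that the orthonormalized, adaptively-chosen basis $V$ can be rotated to a canonical one without creating new dependencies (leveraging the right-invariance of $W$), and that the residual block is genuinely a clean $\beta s\times \beta s$ Wishart independent of the query history. Once this is in place, the anti-concentration piece is essentially Edelman's theorem applied to the hidden Wishart, and the bound on $R$ is routine Gaussian concentration; so the adaptive Gaussian-conditioning argument is the real technical heart.
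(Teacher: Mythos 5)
Your construction and your block decomposition are essentially the paper's: the hard instance is $I - c\,\wishmat$ for a square Wishart block embedded in $\R^{d\times d}$, and the heart of the argument is that after $T$ adaptive queries a $(s-T)\times(s-T)$ corner remains a fresh Wishart matrix independent of the transcript (the paper's Lemma~\ref{lem:independent-component}, proved by a two-sided rotation and induction --- you correctly flag this as the technical crux). The gap is in your final anti-concentration step, and it is genuine. First, a computational error: for $W_V \in \R^{s\times T}$ and $W_\perp \in \R^{s\times(s-T)}$ independent Gaussian blocks, $B = W_V^\top W_\perp$ has $\|B\|_{\op} = \Theta_\beta(s)$, not $O(\sqrt{s})$ (each column of $B$ is $\calN(0, W_V^\top W_V)$ and already has norm $\Theta(s)$). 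Consequently $R = I + B^\top A^{-2}B$ is only $O_\beta(1)$-close to the identity, not $(1+O(1/s))$-close; moreover your identity $\lambda_{\min}(W^\top W) = \lambda_{\min}(R^{-1/2}W_{\mathrm{hid}}R^{-1/2})$ is only a one-sided inequality ``$\le$'' (it comes from restricting the Rayleigh quotient to vectors of the form $(-A^{-1}Bv, v)$; the paper's Lemma~\ref{lem:eigenvalue-of-independent-component} claims only this direction). A constant multiplicative distortion between $\lambda_{\min}(W^\top W)$ and $\lambda_{\min}(W_{\mathrm{hid}})$ destroys the transfer of interval anti-concentration, so your density upper bound on $\lambda_{\min}(W_{\mathrm{hid}})$ says nothing about the conditional law of $\lambda_{\min}(W^\top W)$ given the transcript.

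Second, and more fundamentally, the logic ``density of $\lambda_{\min}(W_{\mathrm{hid}})$ is $O(1/\sqrt{x})$, hence no estimator lands in a width-$O(1/s)$ window except with probability $O(\sqrt{\beta})$'' proves too much: it would imply the trivial estimator $\lamhatmin \equiv 0$ fails with probability $1 - O(\sqrt{\beta})$, whereas that estimator actually succeeds with constant probability (since $\Pr[\lambda_{\min}(\wishmat) \le 1/(4d^2)]$ is a universal constant by Edelman's density). The reason your chain cannot close is that $\lambda_{\min}(W_{\mathrm{hid}})$ is typically of order $1/(\beta s) \gg 1/s$, i.e., the hidden corner usually does \emph{not} govern the global minimum. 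The paper's argument is structured to exploit exactly this: it needs a density \emph{lower} bound near zero ($\Pr[(d-T)^2\lambda_{\min}(\Wcorner) \le \beta/4] \ge p_0\sqrt{\beta}$), so that with probability $\Omega(\sqrt{\beta})$ the unseen corner drags the true minimum below the threshold $t-\epsilon$, and it couples this with the complementary marginal event $\Pr[\lambda_{\min}(\wishmat) \ge t+\epsilon] \ge p_0$ to force the estimator into a dilemma (output above $t$ and be wrong when the corner dips; output below $t$ and be wrong when the full minimum is not small). Only the one-sided inequality $\lambda_{\min}(\wishmat) \le \lambda_{\min}(\Wcorner)$ is needed for this. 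You should replace your last paragraph with an argument of this two-event form; the upper bound on the density is not the relevant ingredient.
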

In particular, any algorithm requires $\Quer(\Alg) \ge \Omega(d)$ queries in ambient dimension $d$ to estimate $\lambda_1(\matM)$ up to error $\BigOhSt{d^{-2}}$ with constant probability, and in fact requires $(1-\LittleOh{1})d$ queries for a $1-\LittleOh{1}$ probability of error. 

\begin{remark}
The sparsity parameter in our construction can be used to encode the accuracy parameter $\epsilon$. Specifically, by setting the parameter $s = 1/ \sqrt{\epsilon}$, Theorem~\ref{thm:main_eig_lb} implies $1/\sqrt{\epsilon}$ queries are necessary for $\epsilon$-accuracy. Alternatively, choosing $s  \ge \Omega(\sqrt{\gap(\matM)})$, we obtain a gap-dependent bound requiring $\Omega(1/\sqrt{\gap(\matM)})$ queries for $\BigOh{\gap(\matM)}$ accuracy. Both bounds match the sharp lower bounds of \cite{simchowitz2018tight} up to logarithmic factors, while also capturing the correct worst-case query complexity for ambient dimension $d$, namely $\Omega(d)$. Moreover, our proof is considerably simpler. 
\end{remark}
\paragraph{Implications for sparsity.} For $s \ge \sqrt{d}$, our lower bound says that first order methods require $\Omega(s)$ queries to approximate the top eigenvalue of matrices $\matM$ with $\#nnz(\matM) = \Theta(s^2)$. Therefore, implementations of first-order methods based on standard matrix-vector multiplies cannot have complexity better than $\Omega(s^3)$ in the worst case. On the other hand, matrix inversion has runtime $d^{\omega}$, and is sufficient both for solving least squares, and for computing eigenvalues and eigenvectors \citep{garber2015fast}. Hence, for $s \in [d^{\omega/3},d]$, we see that matrix inversion  \emph{outperforms} first-order based methods. 

\paragraph{Approximating the top eigenvector.} As a corollary, we obtain the analogous lower bound for algorithms approximating the top eigenvector of a symmetric matrix, and, in particular, an $\Omega(s)$ query complexity lower bound for $\epsilon = \BigOhSt{s^{-2}}$-precision approximations: 
\begin{corollary}\label{cor:eig_approx} In the setting of Theorem~\ref{thm:main_eig_lb}, any $\EAA$ with $\Quer(\Alg) \le (1 - \beta)s - 1$ satisfies
\begin{align*}
\Pr_{\matM \sim \calD,\Alg}\left[\vhat^\top \matM \vhat \le \lambda_1(\matM)\left(1 - \frac{1}{20s^2}\right)\right] \ge \Omega(\sqrt{\beta})
\end{align*}
\end{corollary}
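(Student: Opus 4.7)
The plan is to derive the eigenvector bound via a reduction from the eigenvalue lower bound in Theorem~\ref{thm:main_eig_lb}. Given any $\EAA$ $\Alg$ with $\Quer(\Alg) \le (1-\beta)s - 1$, I will build a derived $\EEA$ $\Alg'$ that simulates $\Alg$ to obtain a candidate $\vhat \in \sphered$, then expends one additional matrix-vector query $\matM\vhat$ and returns the Rayleigh quotient $\lamhat := \vhat^\top \matM \vhat = \inner{\vhat}{\matM\vhat}$. By construction, $\Quer(\Alg') \le (1-\beta)s$, so Theorem~\ref{thm:main_eig_lb} applies to $\Alg'$ on the same distribution $\calD = \calD(s,d,\beta)$ used there.

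The key observation is that the eigenvector failure event implies the eigenvalue failure event. Because $\vhat$ is a unit vector, $\lamhat = \vhat^\top \matM \vhat \le \lambda_1(\matM)$, so $|\lamhat - \lambda_1(\matM)| = \lambda_1(\matM) - \vhat^\top \matM \vhat$. On the almost-sure event $\{\lambda_1(\matM) \le 1\}$ guaranteed by Theorem~\ref{thm:main_eig_lb}, if $\vhat^\top \matM \vhat \ge \lambda_1(\matM)\bigl(1 - \tfrac{1}{20s^2}\bigr)$, then
\[
|\lamhat - \lambda_1(\matM)| \;\le\; \frac{\lambda_1(\matM)}{20 s^2} \;\le\; \frac{1}{20 s^2}.
\]
Contrapositively, the event $\{|\lamhat - \lambda_1(\matM)| \ge 1/(20s^2)\}$ is contained in the eigenvector failure event $\{\vhat^\top \matM \vhat \le \lambda_1(\matM)(1 - 1/(20s^2))\}$.

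Applying Theorem~\ref{thm:main_eig_lb} to $\Alg'$ gives $\Pr[|\lamhat - \lambda_1(\matM)| \ge 1/(20s^2)] \ge \Omega(\sqrt{\beta})$, and by the inclusion above the same lower bound holds for the $\Alg$-failure event, which is exactly the statement of the corollary. There is no real obstacle here beyond verifying the inequality $\lambda_1(\matM)/(20s^2) \le 1/(20s^2)$, which is immediate from the $\lambda_1(\matM) \le 1$ clause of Theorem~\ref{thm:main_eig_lb}; the only slightly delicate point is the bookkeeping of the $+1$ extra query, which is exactly why the hypothesis is $\Quer(\Alg) \le (1-\beta)s - 1$ rather than $(1-\beta)s$.
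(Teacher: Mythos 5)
Your proposal is correct and is essentially the paper's own argument: form the induced $\EEA$ that spends one extra query to return the Rayleigh quotient $\lamhat = \vhat^\top \matM \vhat$, then use $\lamhat \le \lambda_1(\matM)$ together with the almost-sure bound $\lambda_1(\matM) \le 1$ to show the eigenvalue-failure event is contained in the eigenvector-failure event, and invoke Theorem~\ref{thm:main_eig_lb}. If anything, your statement of the event inclusion is slightly cleaner than the paper's final sentence, which loosely conflates the probability of success with the probability of failure; the substance is identical.
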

\begin{proof} For ease, set $\epsilon := \frac{1}{20s^2}$. Let  $\Quer(\Alg) \le (1-\beta)s - 1$, and let $\lamhat = \vhat^\top \matM \vhat$, which can be computed using at most $1$ additional query. Since $\lamhat \le \max_{v \in \sphered} v^\top \matM v = \lambda_1(\matM)$, and since $\lambda_1(\matM) \le 1$ we see that $\vhat^\top \matM \vhat \ge \lambda_1(\matM) - \epsilon$ only if $|\lamhat - \lambda_1(\matM)| \le \epsilon$. Thus, recalling $\lambda_1(\matM) \le 1$, we have $\Pr_{\calD,\Alg}\left[\vhat^\top \matM \vhat \ge \lambda_1(\matM)(1 - \epsilon) \right] \le \Pr_{\calD,\Alg}\left[\vhat^\top \matM \vhat \ge \lambda_1(\matM) - \epsilon \right] = \Pr_{\calD,\Alg}\left[ |\lamhat - \lambda_1(\matM)| \le \epsilon\right]$, which is at least $\Omega(\sqrt{\beta})$ by  Theorem~\ref{thm:main_eig_lb}.% as $\lamhat$ is computed using at most $c_0 \gappar^{-1/2}$ queries. 
\end{proof}
Again, the the sparsity $s$ can be used to encode the accuracy parameter $\epsilon$ via $\epsilon := \frac{1}{20s^2}$.

%One can show stron

%One can also show (see Appendix \ref) that no $\EAA$ can succeed with only vanishing small probability. By taking $\gappar = d^2$, we obtain the following conclusion:

%This is standard... 
%Given a set of a problem parameters $\params$, we say $g(\params) \le \BigOh{f(\params)}$ (resp $g(\params) \ge \Omega{f(\params)}$) if there exists a universal cosntant $c > 0$, independent of $\params$, for which $g(\params) \le c f(\params)$ (resp. $g(\params) \ge c f(\params)$) for all admissable setings of $\params$. We use $g(\params) \le \BigOhSt{f(\params)}$ as short hand for the phrase ``let $g(\params) \le c_* f(\params)$ for a specific, universal constant $c_*$.''

%\begin{thm}\label{thm:eigenvalue} Fix a $\nu \in (0,1)$ sufficiently small. Then, there exists a universal constants $C_1$ and $C_2$ and a $d = \nu^{-3/2} C_2$ such that any $((\dots)\nu^{-3/2},\frac{1}{10},\nu,d)$-correct eigenvalue approximation algorithm has query complexity at least $C_1/\sqrt{\nu}$.
%\end{thm}

%\begin{cor}[All Reductions]
%\end{cor}

%!TEX root = ./colt/colt_main.tex

\subsection{Lower Bounds for Solving Linear Systems}

We now present our lower bounds for minimizing quadratic functions. We consider the following query model:

\begin{definition}[Gradient Query model for Linear System Solvers]\label{def:lin_sys} We say that $\Alg$ is an $\QAA$ if $\Alg$ is given initial point $x_0 \in \R^d$ and linear term $b \in \R^d$, and it interacts with an unknown symmetric matrix $A \in \PD $ via $T$ adaptive, randomized queries, $\wi = A\vi$, and returns an estimate $\xhat \in \R^d$ of $A^{-1}b$. Again, we call $T$ the \emph{query complexity} of $\Alg$.
\end{definition}
Defining the objective function $f_{A,b}(x) := \frac{1}{2}x^\top A x - b^\top x$, we see that the query model of Definition~\ref{def:lin_sys} is equivalent to being given a gradient query at $0$, $\nabla f_{A,b}(0) = b$, and making queries $\nabla f_{A,b}(\vi) = A\vi - b$. We shall use $\Pr_{\Alg,(x_0,b,A)}$ do denote probability induced by running the a $\QAA$ $\Alg$ on the instance $(x_0,b,A)$. Our lower bound in this model is as follows, stated in terms of the function suboptimality $\|\xhat - A^{-1}b\|_A^2 = f_{A,b}(\xhat) - \min_x f_{A,b}(x)$.

\begin{theorem}[Lower Bound for Linear System Solvers]\label{thm:quad_lb} Let $d_0\in \N$ be a universal constant. Then for all ambient dimensions $d\ge d_0$, and all $s \in \,[d_0^2,\,d^2]$,  any $\QAA$ $\Alg$ which satisfies the guarantee 
\begin{align}
\Pr_{\Alg,x_0,b,A}\left[\|\xhat - A^{-1}b\|_A^2 \le \frac{1}{e} \cdot \frac{ \lambda_1(A) \|x_0\|^2}{\,s^2}\right] \ge 1 - \frac{1}{e} \label{eq:to_contradict}
\end{align}
for all $(d + s^2)$-sparse matrices $M \in \PD$ with $\cond(M) \le \BigOh{s^2}$, and all $(x_0,b) \in \R^d \times \R^d$, must have query complexity at least
\begin{align*}
\Quer(\Alg) \ge  \BigOmega{ s \cdot (\log^{2+\log} s)^{-1}},
\end{align*}
where  $\log^{p+\log}(x) := (\log^p x) \cdot (\log \log x)$. 
\end{theorem}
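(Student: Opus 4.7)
The plan is to prove this via a shift-and-invert reduction to the eigenvalue lower bound, Theorem~\ref{thm:main_eig_lb}. Given the hard Wishart distribution $\calD = \calD(s, d, \beta)$ from that theorem---supported on $s^2$-sparse $\matM \in \PDof{d}$ with $\lambda_1(\matM) \leq 1$ almost surely and $\gap(\matM) \geq \Omega_\beta(1)/s^2$---I would take $A := \sigma I - \matM$ with $\sigma := 1 + c/s^2$ for a small constant $c > 0$. Then $A$ is $(d + s^2)$-sparse and PSD with $\lambda_d(A) = \Theta(1/s^2)$, $\lambda_1(A) \leq 2$, and $\cond(A) = O(s^2)$, exactly matching the hypotheses on admissible instances. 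Crucially, $A^{-1}$ shares eigenvectors with $\matM$, and the top-two spectral ratio $(\sigma - \lambda_2(\matM))/(\sigma - \lambda_1(\matM))$ is bounded below by $1 + \Omega(1)$---a constant strictly greater than $1$.

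Given any $\QAA$ $\Alg$ with query complexity $T$ satisfying~(\ref{eq:to_contradict}), I would construct an $\EEA$ by approximate shifted inverse iteration: sample $v_0$ uniformly on $\sphered$, iterate $v_k = \hat z_k / \|\hat z_k\|$ for $k = 1, \ldots, K := O(\log s)$, where $\hat z_k \approx A^{-1} v_{k-1}$ is obtained from $\Alg$, and return the Rayleigh quotient $\lamhat := v_K^\top \matM v_K$ (one extra matrix-vector query). Because $A^{-1}$ has constant spectral ratio and the initial overlap $|\langle v_0, v_1(\matM)\rangle| = \Omega(1/\sqrt d)$ with constant probability, $K = O(\log s)$ ideal inverse-iteration steps suffice to drive $\sin \angle(v_K, v_1(\matM)) \leq O(1/s)$, and the Rayleigh quotient converts this to $|\lamhat - \lambda_1(\matM)| = O(1/s^2)$, precisely the accuracy threshold in Theorem~\ref{thm:main_eig_lb}.

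The technical core is the accuracy of the approximate solves. Calling $\Alg$ with $(x_0, b) = (v_{k-1}, v_{k-1})$ yields, by~(\ref{eq:to_contradict}) and $\lambda_1(A) \leq 2$, relative residual $\|A \hat z - v_{k-1}\|/\|v_{k-1}\| = O(1/s)$ with probability $\geq 1 - 1/e$. To push this below the per-iteration contraction, I would wrap each solve in $R = O(\log s)$ rounds of Richardson iterative refinement (each costing one call to $\Alg$ plus one matrix-vector query to form the new residual), which reduces the residual geometrically to inverse-polynomial in $s$. To absorb the $1/e$ per-call failure probability, each of the $KR = O(\log^2 s)$ calls is additionally amplified via $O(\log\log s)$ independent repetitions, returning the candidate with the smallest observed residual; this brings per-call failure below $O(1/\log^2 s)$, and a union bound yields overall constant success probability. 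The constructed $\EEA$ therefore uses $T \cdot O(\log^2 s \cdot \log\log s) = T \cdot O(\log^{2+\log} s)$ queries to $\matM$; Theorem~\ref{thm:main_eig_lb} forces this count to be $\Omega(s)$, whence $T = \Omega(s / \log^{2+\log} s)$, as claimed.

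The main obstacle will be the accuracy bookkeeping: controlling how a per-call $A$-norm error propagates through $R$ rounds of Richardson refinement and then through $K$ approximate inverse-iteration updates without overwhelming the $\Omega(1)$ spectral contraction, and verifying that the scaling $\|x_0\| = \|v_{k-1}\|$ in each call keeps guarantee~(\ref{eq:to_contradict}) useful at every iterate (one must be especially careful during the first few iterations, before $v_k$ has begun to align with $v_1(\matM)$ and amplify the signal component of $A^{-1}v_{k-1}$). A secondary point is that the normalization $\lambda_1(\matM) \leq 1$ in Theorem~\ref{thm:main_eig_lb} is precisely what permits the deterministic shift $\sigma = 1 + c/s^2$, sparing the reduction an extra $\log s$ factor from an outer binary search over candidate shifts.
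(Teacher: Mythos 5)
Your proposal is correct and matches the paper's proof essentially step for step: the paper likewise shifts the hard instance of Theorem~\ref{thm:main_eig_lb} to $A = (1+(1+\alpha)\gappar)I - M$ with $\cond(A) = O(s^2)$ and constant $\gap(A^{-1})$, runs a noisy inverse power method with $O(\log s)$ iterations (Lemma~\ref{lem:shift_and_invert}), boosts each moderate-precision solve to high Euclidean precision via $O(\log s)$ warm restarts and amplifies per-call success probability by $O(\log\log s)$ best-of repetitions (Lemma~\ref{lem:lin_sys_bootstrap}), arriving at the same $\log^{2+\log} s$ overhead and the same contradiction with the eigenvalue lower bound. The one detail to add is that when $s \ll d$ the initialization must be drawn from the known $s$-dimensional support subspace of $v_1(\matM)$ (as in the paper's Proposition~\ref{prop:eig_lin_reduc_subspace}); with your stated $\Omega(1/\sqrt{d})$ initial overlap the iteration count would be $O(\log d)$ rather than $O(\log s)$, which weakens the claimed bound for $s$ much smaller than $d$.
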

In particular, any algorithm which ensures $\|\xhat - A^{-1}b\|_A^2 \le \BigOhSt{\frac{ \lambda_1(A) \|x_0\|^2}{ d^2}}$ with probability $1 - \frac{1}{e}$ requires $\BigOmegaTil{d}$-queries.  

\begin{remark}As with the eigenvector lower bounds, we can use the sparsity parameters to encode accuracy. Specifically, by $s = \sqrt{\cond}$, obtaining a function suboptimality $f_{A,b}(\xhat) - \min_x f_{A,b}(x) = \|\xhat - A^{-1}b\|_A^2 $ of $\BigOhSt{1/\cond}$ requires $\BigOmegaTil{\sqrt{\cond}}$ queries, matching known upper bounds achieved by the conjugate gradient method up to logarithmic factors \citep{trefethen1997numerical}, which in turn match information-theoretic lower bounds \citep{simchowitz2018randomized}. Moreover, these in turn can be converted into an \emph{minimax} lower bound by in turn selecting the condition number parameter as $\cond \propto 1/\epsilon$. Indeed, this implies that to ensure $f_{A,b}(\xhat) - \min_x f_{A,b}(x) \le \epsilon$, one requires $\BigOmegaTil{1/\sqrt{\epsilon}}$ queries, entailing the minimax rate up to logarithmic factors. 
\end{remark}

We prove Theorem~\ref{thm:quad_lb} by leveraging a well-known reduction from eigenvector approximation to minimizing quadratic functions, known as ``shift-and-invert'' \citep{saad2011numerical,pmlr-v48-garber16} . To state the result, we define a class of matrices to which the reduction applies:
\begin{align*}
\Mclass(\gappar,\alpha) := \left\{M \in \Sympl^d: \gap(M) \ge \gappar,~ |\lambda_1(M)-1|\le \alpha\, \gappar,~ \lambda_1(M) \in \left[\tfrac{1}{2},2\right]\right\},
\end{align*}
The term $\gappar$ corresponds to $\gap(M)$, whereas $\alpha$ measures to how close $\lambda_1(M)$ is to $1$. The rescaling to ensure $\lambda_1(M) \in \left[\tfrac{1}{2},2\right]$ is for simplicity, and more generally $\alpha$ corresponds to an approximate foreknowledge of $\lambda_1(M)$ (which is necessary to facilitate the reduction). We further note that the distribution $\calD(s,d,\beta)$ from Theorem~\ref{thm:main_eig_lb} satisfies, for some functions $\cgap(\cdot)$ and $\ctopeig(\cdot)$,
\begin{align}
\Pr_{\matM \sim \calD(s,d,\beta)}\left[\matM \in \Mclass\left(\frac{\cgap(\beta)}{s^2}, \,\frac{\ctopeig(\beta)}{\cgap(\beta)}\right) \right] = 1 \label{eq:Mclass_Dist}
\end{align}

With this definition in hand, we provide a precise guarantee for the reduction, which we prove in Appendix~\ref{sec:lin_redux_proof}:

\newcommand{\dmin}{d_{\min}}
\begin{proposition}[Eigenvector-to-Linear-System Reduction]\label{prop:eig_lin_reduc} Let $d \ge \dmin$ for a universal $\dmin$.  Fix a $\gappar \in (0,1)$, $\alpha > 0$, and suppose that $\Alg$ be a $\QAA$ which satisfies~\eqref{eq:to_contradict} with $\condpar := 1+\alpha + \frac{1}{\gappar}$ for all $A \in \PD$ with $\cond(A) \le \condpar$. Then, for any $\delta \in (0,1/e)$,  there exists an $\EAA$, $\AlgEig$, which satisfies
\begin{align*}
 \Pr_{\AlgEig,M}\left[\vhat^\top M \vhat \ge (1 - c\gappar)\lambda_1(M) \right] \ge 1 - \delta, \quad \forall M \in \Mclass\left(\gappar,\alpha\right) 
\end{align*}
with query complexity at most 
\begin{align*}
\Quer(\AlgEig) \le \Quer(\Alg) \cdot \BigOhPar{\alpha}{(\log \frac{1}{\delta}) \cdot \log^{2+\log} \frac{d}{\min\{c\gappar,\, 1\}}},
\end{align*} where $\BigOhPar{\alpha}{\cdot}$ hides multiplicative and additive constants depending on $\alpha$.
\end{proposition}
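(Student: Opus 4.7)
The plan is to instantiate the classical shift-and-invert preconditioning trick \citep{saad2011numerical,pmlr-v48-garber16}. Pick the shift $\lambda := 1 + (\alpha+1)\gappar$; because $M \in \Mclass(\gappar,\alpha)$ guarantees $|\lambda_1(M)-1| \le \alpha\gappar$, $\lambda$ is an a priori overestimate of $\lambda_1(M)$ and $A := \lambda I - M \in \PD$. The matrix $A$ shares its eigenvectors with $M$ (in reversed order), and a direct calculation gives $\lambda - \lambda_1(M) \in [\gappar,(2\alpha+1)\gappar]$ and $\lambda - \lambda_d(M) \le \lambda = O_\alpha(1)$, so $\cond(A) = O_\alpha(1/\gappar)$; this fits inside $\condpar = 1+\alpha+1/\gappar$ once $\alpha$-dependent constants are absorbed into the $\BigOhPar{\alpha}{\cdot}$ of the final bound. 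The decisive feature is that the \emph{gap of the inverse} is amplified to a quantity bounded below independently of $\gappar$:
\begin{align*}
\gap(A^{-1}) \;=\; \frac{\lambda_1(M) - \lambda_2(M)}{\lambda - \lambda_2(M)} \;\ge\; \Omega_\alpha(1),
\end{align*}
since the numerator is at least $\gappar/2$ while the denominator is comparable to it up to an $O_\alpha(1)$ factor.

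With this amplification in hand, I would run $T = \Theta(\log(d/(\gappar\delta)))$ rounds of inexact power iteration on $A^{-1}$, started from a uniformly random unit vector $v_0 \in \sphered$; standard anti-concentration for Gaussian projections yields $(v_0^\top v_1(M))^2 \ge \Omega(\delta^2/d)$ with probability $\ge 1-\delta/2$. Because the amplified gap is a constant, a standard inexact power-iteration analysis shows that provided each $A^{-1}$-application is computed to accuracy $\epsilon_\mathrm{in} := (\gappar/d)^C$ in $\|\cdot\|_A$ for a sufficiently large universal $C$, the output $\vhat$ satisfies $1-(\vhat^\top v_1(M))^2 \le O(\gappar)$, which in turn yields $\vhat^\top M \vhat \ge (1-c\gappar)\lambda_1(M)$ as required.

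Each $A^{-1}u$ query is implemented by invoking $\Alg$ on $Ax = u$. Since one invocation only supplies the weak base guarantee $\|\xhat - A^{-1}u\|_A^2 \le \lambda_1(A)\|x_0\|^2/(es^2)$ with success probability $1 - 1/e$, I would boost it in two nested stages. For \emph{accuracy}, I chain $K = O(\log(1/\epsilon_\mathrm{in})/\log\condpar) = O_\alpha(\log(d/\gappar))$ warm-started invocations, each geometrically contracting the remaining residual. For \emph{probability}, I repeat each warm-started invocation $R = O(\log(TK/\delta))$ times and accept the first iterate whose residual $\|A\xhat-u\|$ (computable with one additional matrix-vector product) certifies the target accuracy; a union bound over all $TK$ inner solves keeps the total failure probability below $\delta/2$.

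Multiplying the three loop-lengths, the resulting $\EAA$ has query complexity
\begin{align*}
\Quer(\AlgEig) \;\le\; T \cdot K \cdot R \cdot \Quer(\Alg) \;=\; \Quer(\Alg)\cdot \BigOhPar{\alpha}{\log\tfrac{1}{\delta}\cdot \log^2\tfrac{d}{\gappar}\cdot\log\log\tfrac{d}{\gappar}},
\end{align*}
which matches the claimed bound once one notes $\log^{2+\log}(x) = \log^2(x)\log\log(x)$. The principal obstacle I expect is the inexact-power analysis: one must certify that polynomial-in-$(d/\gappar)$ per-step errors cannot compound over $T$ rounds to overwhelm the $\Omega(1)$-per-step contraction of the leading component, which is what pins down $\epsilon_\mathrm{in}$ and hence $K$. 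A secondary subtlety is that the residual check $\|A\xhat - u\|$ certifies $\|\xhat - A^{-1}u\|_A$ only up to a $\cond(A)$-dependent factor, which must be tracked carefully to validate the probability boost from the weak $1 - 1/e$ base guarantee on $\Alg$.
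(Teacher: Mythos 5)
Your proposal is correct and follows essentially the same route as the paper: the same shift $A=(1+(1+\alpha)\gappar)I-M$ with $\cond(A)\le\condpar$ and $\gap(A^{-1})\ge\Omega_\alpha(1)$, an inexact power method on $A^{-1}$, and a two-stage bootstrap of the weak guarantee~\eqref{eq:to_contradict} (geometric accuracy chaining via warm restarts, plus probability amplification), yielding the same $\log^{2+\log}$ overhead. Two execution details differ, and in both the paper's choice sidesteps an obstacle you flag. First, rather than hand-rolling the inexact power iteration (your ``principal obstacle''), the paper invokes the noisy power method of \cite{hardt2014noisy} as a black box (Lemma~\ref{lem:noisy_power}), which already quantifies how much per-step error the $\Omega_\alpha(1)$ inverse gap can absorb. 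Second, for probability amplification the paper does not use a residual certificate $\|A\xhat-u\|$: it runs $q$ independent copies and returns the \emph{argmin of the objective} $f(x)=\frac12 x^\top Ax-b^\top x$ over the candidates (one extra query each), which is monotone in $\|x-A^{-1}u\|_A$ and hence selects an iterate at least as good as the best candidate with no $\cond(A)$ slack to track. Finally, note that folding $\delta$ into the random initialization (via $(v_0^\top v_1)^2\ge\Omega(\delta^2/d)$) and also into the per-solve repetition count gives a $\log^2\frac1\delta$ dependence for small $\delta$, slightly worse than claimed; the paper instead runs the whole procedure at constant failure probability and does $\BigOh{\log\frac{1}{\delta}}$ outer restarts, returning the restart with the largest Rayleigh quotient, which yields the clean multiplicative $\log\frac{1}{\delta}$.
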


We can now prove Theorem~\ref{thm:quad_lb} by combining Proposition~\ref{prop:eig_lin_reduc} and Theorem~\ref{thm:main_eig_lb}:

%NOTE: we can derive smaller condition numbers from the reduction directly, but then we have log d instead of log cond in the bound
%
\begin{proof}
We shall prove the theorem in the regime where $s = d$. The general $s$ case is attained by embedding an instance of dimension $s$ into dimension $d$, as in the proof of Theorem~\ref{thm:main_eig_lb}, and is deferred to Appendix~\ref{sec:arbitrary_condition}. 

To begin, let $\beta = \frac{1}{2}$ (any constant in $(0,1)$ suffices); throughout, $\beta$ will be a universal constant, rather than a problem parameter.
Next, fix an ambient dimension $d \ge d_0 := \dmin \vee d_0(\beta)$, where $d_0$ is from Theorem~\ref{thm:main_eig_lb}, and $\dmin$ from Proposition~\ref{prop:eig_lin_reduc}. 

Lastly, let $\gappar := \frac{\cgap(\beta)}{d^2}$ and $\alpha := \frac{\ctopeig(\beta)}{\cgap(\beta)}$, where $\cgap(\cdot)$ and $\ctopeig(\cdot)$ are as in \eqref{eq:Mclass_Dist}. Let $\matM \sim \calD(d,d,\beta)$. Then,~\eqref{eq:Mclass_Dist} ensures $\matM \in \Mclass(\gappar, \alpha)$ with probability $1$. For the sake of contradiction, suppose that $\Alg$ is a $\QAA$ which satisfies the guarantee of~\eqref{eq:to_contradict} for all 
\begin{align*}
A: \cond(A) \le 1 + \alpha + \frac{1}{\gappar} =  \BigOh{d^2}.
\end{align*} 

Then, for the universal constant $c := \frac{1}{20\cgap(\beta)}$, there exists an $\EAA$ $\AlgEig$ which satisfies, for all $M \in \Mclass\left(\gappar,\alpha\right)$,
\begin{align*}
 \Pr_{\AlgEig,M}\left[\vhat^\top M \vhat \ge \left(1 - \frac{1}{20d^2}\right)\lambda_1(M) \right] &= \Pr_{\AlgEig,M}\left[\vhat^\top M \vhat \ge (1 - c\,\gappar)\lambda_1(M) \right] \ge 1 - \Omega(\sqrt{\beta}), 
\end{align*}
whose query complexity $\Quer(\AlgEig)$ is bounded by
\begin{align*}
 \Quer(\Alg) \cdot \BigOhPar{\alpha}{(\log \tfrac{1}{\Omega(\sqrt{\beta}) \wedge \,e}) \cdot \log^{2+\log} \tfrac{d}{1 \wedge c\,\gappar,1}} = \Quer(\Alg) \cdot \BigOh{ \log^{2+\log} d},
\end{align*}
where we use $\gappar = \Theta(d^2)$, and that $\alpha,c,\beta,\Omega(\beta)$ depend on universal constants, and not on the choice of dimension $d$.  By Theorem~\ref{thm:main_eig_lb}, we must have $\Quer(\AlgEig) \ge d/2$, whence
\begin{align*}
\Quer(\Alg) \ge \frac{\cquery}{\sqrt{\gappar}} \cdot  \BigOmega{ (\log^{2+\log} d)^{-1}} = \BigOmega{d (\log^{2+\log} d)^{-1}}.
\end{align*}

\end{proof}

%!TEX root = main.tex

\section{Proof of Theorem~\ref{thm:main_eig_lb}\label{sec:proof_main_eig}}
\newcommand{\distconv}{\overset{\mathrm{dist}}{\to}}
\newcommand{\wishmatd}[1]{\mathbf{M}^{(d)}}
\newcommand{\wishmat}{\mathbf{W}}
\newcommand{\Zsf}{\mathsf{Z}}
\newcommand{\perr}{p_{\mathrm{err}}}
\newcommand{\Xmat}{\mathbf{X}}
\newcommand{\matLam}{\boldsymbol{\Lambda}}
\newcommand{\matx}{\mathbf{x}}
\newcommand{\matz}{\mathbf{z}}
\newcommand{\dnot}{\mathsf{d}_0}

\newcommand{\vsf}{\mathsf{v}}
\newcommand{\wsf}{\mathsf{w}}

\newcommand{\ortho}{\mathscr{O}}
\newcommand{\matXtil}{\widetilde{\matX}}
\newcommand{\matY}{\mathbf{Y}}

\newcommand{\matutil}{\widetilde{\matU}}
\newcommand{\Wigner}{\mathrm{Wigner}}
\newcommand{\Wishart}{\mathrm{Wishart}}

\newcommand{\matYtil}{\widetilde{\matY}}
\newcommand{\matxtil}{\widetilde{\matx}}
\newcommand{\matytil}{\widetilde{\maty}}
\newcommand{\tuple}{\mathsf{Z}}
\newcommand{\matr}{\mathbf{r}}
\newcommand{\wtil}{\widetilde{w}}
\newcommand{\mato}{\mathbf{o}}
\newcommand{\matO}{\mathbf{O}}
\newcommand{\matR}{\mathbf{R}}
\newcommand{\matV}{\mathbf{V}}
\newcommand{\Xmatd}{\mathbf{X}^{(d)}}
\newcommand{\Xmatdtop}{\mathbf{X}^{(d)\top}}

The proof of Theorem~\ref{thm:main_eig_lb} follows by deriving a lower bound for the problem of estimating the least eigenvalue of a classical random matrix ensemble known as the (standard) \emph{Wishart} matrices:
\begin{definition}[Wishart Distribution] We write $\wishmat \sim \Wishart(d)$ to denote a random matrix with the distribution $\wishmat \overset{d}{=} \Xmat\Xmat^\top$, where $\Xmat \in \R^{d \times d}$ and $\Xmat_{i,j} \iidsim \calN(0,\frac{1}{d}I) $.
\end{definition}

\newcommand{\dimfunc}{\mathsf{d}}
\newcommand{\lamhatmin}{\lamhat_{\min}}
\newcommand{\cprobwish}{\mathrm{c}_{\mathrm{wish}}}

We now state our main technical contribution, which lower bounds the number of queries required for estimation the smallest eigenvalue of a matrix $\wishmat \sim \Wishart(d)$:
\begin{theorem}[Lower Bound for Wishart Eigenvalue Estimation]\label{thm:main_wish}  There exists a universal constant $p_0$ and function $\dimfunc: (0,1) \to \N$ such that the following holds: for all $\beta \in (0,1)$, and all $d \ge \dimfunc(\beta)$, we have that $\wishmat \sim \Wishart(d)$ satisfies 
\begin{enumerate}
	\item[(a)] Any algorithm $\Alg$ which makes $T \le (1-\beta) d$ adaptively chosen queries, and returns an estimate $\lamhat_{\min}$ of $\lambda_{\min}(\wishmat)$ satisfies
	\begin{align*}
	\Pr_{\wishmat,\Alg}\left[|\lamhatmin - \lambda_{\min}(\wishmat)| \ge \frac{1}{4d^2}\right] \ge \cprobwish \sqrt{\beta}.
	\end{align*} 
	\item[(b)] There exists constants $C_1(\beta)$ and $C_2(\beta)$ such that 
	\iftoggle{colt}
	{
		\begin{multline*}
		\Pr_{\wishmat}\left[\{\lambda_d(\wishmat) \le C_1(\beta)d^{-2}\} \cap \{\lambda_{d-1}(\wishmat) - \lambda_d(\wishmat) \ge C_2(\beta)d^{-2}\} \cap \{\|\wishmat\| < 5\}  \right] \\
		\ge 1 - \tfrac{\cprobwish \sqrt{\beta}}{2}.
		\end{multline*}
	}
	{
		\begin{align*}
		\Pr_{\wishmat}\left[\{\lambda_d(\wishmat) \le C_1(\beta)d^{-2}\} \cap \{\lambda_{d-1}(\wishmat) - \lambda_d(\wishmat) \ge C_2(\beta)d^{-2}\} \cap \{\|\wishmat\| < 5\}  \right] 
		\ge 1 - \tfrac{\cprobwish \sqrt{\beta}}{2}.
		\end{align*}
	}
\end{enumerate}
\end{theorem}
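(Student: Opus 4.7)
For part~(b), I would verify the three spectral events using classical results about the square Wishart ensemble. Marchenko--Pastur together with standard non-asymptotic edge bounds gives $\|\wishmat\|<5$ with arbitrarily high probability. Edelman's theorem that $d^{2}\lambda_d(\wishmat)$ converges to a non-degenerate (essentially shifted exponential) limit supplies $\lambda_d(\wishmat)\le C_1(\beta)/d^2$. The bottom-gap bound $\lambda_{d-1}(\wishmat)-\lambda_d(\wishmat)\ge C_2(\beta)/d^2$ is a level-repulsion statement: the Vandermonde factor $\prod_{i<j}|\lambda_i-\lambda_j|$ in the joint eigenvalue density keeps the bottom two eigenvalues separated by $\Omega(1/d^2)$ with constant probability, after integrating out the other coordinates. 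Tuning $C_1(\beta)$ small and $C_2(\beta)$ large as functions of $\beta$ drives each individual failure probability below $\cprobwish\sqrt{\beta}/6$, and a union bound delivers~(b).

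For part~(a) I would use a Schur-complement conditioning argument exploiting Haar invariance of $\Wishart(d)$. The standard reduction for rotationally invariant ensembles lets me assume without loss of generality that the algorithm's $T$ queries have been orthonormalized into a fixed matrix $V\in\R^{d\times T}$, so that the only information the algorithm sees is the pair of blocks $W_{11}:=V^\top\wishmat V$ and $W_{12}:=V^\top\wishmat V_\perp$, where $V_\perp$ is an orthonormal basis of the orthogonal complement. Writing $\wishmat=\Xmat\Xmat^\top$ with $\Xmat_{ij}\iidsim\calN(0,1/d)$ and splitting the rows of $\Xmat$ along $V\oplus V_\perp$, the Schur complement
\[
W_{22\cdot 1}\;:=\;V_\perp^\top\wishmat V_\perp\;-\;W_{12}^\top W_{11}^{-1}W_{12}
\]
is distributed as $\tfrac{d-T}{d}\,\wishmat'$ for an \emph{independent} $\wishmat'\sim\Wishart(d-T)$, a direct consequence of the rotational invariance of the rows of $\Xmat$ in the adapted basis. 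Consequently, conditional on the transcript, the hidden randomness is a fresh Wishart of dimension $d-T\ge\beta d$, and applying part~(b) to $\wishmat'$ shows that $\lambda_d(\wishmat')$ has a bounded density on its natural scale $1/(\beta d)^2$. A perturbative expansion of the Schur characteristic equation $\det\!\bigl(W_{22\cdot 1}-\lambda(I+W_{12}^\top(W_{11}-\lambda I)^{-2}W_{12})\bigr)=0$ near $\lambda=0$, using the bottom gap and $\|W_{11}\|=O(1)$ from part~(b) applied to the full $\wishmat$, shows that on a high-probability event $\lambda_d(\wishmat)$ is a Lipschitz function of $\lambda_d(W_{22\cdot 1})$ whose constants depend only on the revealed blocks. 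Transporting the density bound through this map, the conditional density of $\lambda_d(\wishmat)$ given the transcript is at most $O(d^2)$ up to $\beta$-dependent prefactors, and for any transcript-measurable estimator $\lamhatmin$,
\[
\Pr\!\bigl[|\lamhatmin-\lambda_d(\wishmat)|<\tfrac{1}{4d^2}\bigr]\;\le\;1-\cprobwish\sqrt{\beta},
\]
after tracking constants, giving the claim.

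\textbf{Main obstacle.} The substantive work is transferring the anti-concentration of $\lambda_d(W_{22\cdot 1})$ --- which lives in the hidden, \emph{independent} Schur complement --- into anti-concentration of $\lambda_d(\wishmat)$, uniformly over the revealed blocks $(W_{11},W_{12})$. The Lipschitz correspondence between the two smallest eigenvalues relies delicately on the bottom spectral gap established in part~(b); without this gap, the identity of the smallest eigenvalue of $\wishmat$ could swap between nearby eigendirections under small perturbations, breaking the implicit function theorem applied to the Schur characteristic equation near $\lambda=\lambda_d(\wishmat)$. This coupling between parts~(a) and~(b) is the real content of the argument, while the remaining steps are either standard random matrix calculations (Marchenko--Pastur, Edelman, Vandermonde repulsion) or routine conditioning facts available for rotationally invariant ensembles.
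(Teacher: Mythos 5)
Your part~(b) and your identification of the hidden randomness in part~(a) both match the paper: the paper also derives (b) from the asymptotic facts (the joint limit of $(d^2\lambda_d,d^2\lambda_{d-1})$, the explicit limiting density of $d^2\lambda_d$, and the operator-norm bound) via a union bound, and your Schur complement $W_{22\cdot 1}=W_{22}-W_{21}W_{11}^{-1}W_{12}$ is exactly the corner matrix $\Wcorner$ of the paper's Lemma~3.4, with the same conditional $\Wishart(d-T)$ law given the transcript.

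The gap is in the last step of part~(a), which is precisely what you flag as the ``main obstacle'': you propose to transport a density bound for $\lambda_{\min}(W_{22\cdot 1})$ to a conditional density bound for $\lambda_{\min}(\wishmat)$ via a Lipschitz correspondence extracted from the Schur characteristic equation. This step is not carried out, and as sketched it would fail. First, the expansion near $\lambda=\lambda_d(\wishmat)$ requires control of $(W_{11}-\lambda I)^{-1}$, but $W_{11}$ is the compression of $\wishmat$ onto the \emph{adaptively chosen} query subspace, so $\lambda_{\min}(W_{11})$ can be as small as $\lambda_d(\wishmat)=\Theta(d^{-2})$ itself; the bound $\|W_{11}\|=O(1)$ controls the wrong end of the spectrum, and no high-probability event uniform over adversarial queries rescues this. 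Second, the limiting density of $d^2\lambda_d$ is $f(x)=\tfrac{x^{-1/2}+1}{2}e^{-(x/2+\sqrt{x})}$, which is \emph{unbounded} at $0$, so ``bounded conditional density $O(d^2)$'' is false on the relevant scale and must be replaced by an integrated anti-concentration statement (this is also where the $\sqrt{\beta}$ in the answer comes from: $\Pr[d^2\lambda_d\le\alpha^2]\ge p_0\alpha$). The paper sidesteps the entire two-sided correspondence. Its Lemma~3.5 gives the \emph{deterministic, one-sided} inequality $\lambda_{\min}(\wishmat)\le\lambda_{\min}(\Wcorner)$, proved by testing with $z=(-A^{-\top}B^\top v;\,v)$ where $v$ is the bottom eigenvector of the corner. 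Then a two-threshold argument with $t=\tfrac{1}{2d^2}$ and $\epsilon=\tfrac{1}{4d^2}$ finishes: if $\lamhatmin\ge t$, the error event contains $\{\lambda_{\min}(\Wcorner)\le t-\epsilon\}$, whose conditional probability given the transcript is a fixed Wishart quantity $\ge p_0\sqrt{\beta}/2$ by independence; if $\lamhatmin<t$, the error event contains $\{\lambda_{\min}(\wishmat)\ge t+\epsilon\}$, which has unconditional probability $\ge p_0$. Combining the two branches yields
\begin{align*}
\perr \;\ge\; \frac{\Pr[\lambda_{\min}(\Wcorner)\le t-\epsilon]\cdot\Pr[\lambda_{\min}(\wishmat)\ge t+\epsilon]}{1+\Pr[\lambda_{\min}(\Wcorner)\le t-\epsilon]}\;\ge\;\cprobwish\sqrt{\beta},
\end{align*}
with no need for any regularity of the conditional law of $\lambda_{\min}(\wishmat)$. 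You should replace the Lipschitz/density-transport step with this one-sided bound and threshold argument.
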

Note that, by taking $\beta = \LittleOh{1}$,  Theorem~\ref{thm:main_wish} in fact demonstrates that $(1-\LittleOh{1}))d$ queries are required for an $\LittleOh{1}$ probability of failure, showing that no nontrivial improvements can be achieved.% We now establish Theorem~\ref{thm:main_eig_lb}:

\begin{proof}[Proof of Theorem~\ref{thm:main_eig_lb}] Fix  $\beta \in (0,1)$, and let $d_0(\cdot) = \dimfunc(\cdot)$ denote the function from Theorem~\ref{thm:main_wish}. For $s \ge d_0(\beta)$, let $\wishmat \sim \Wishart(s)$, and define 
\begin{align*}
\matM = \begin{bmatrix} I_{s \times s}  - \frac{1}{5}\wishmat & 0 \\
0 & 0 \end{bmatrix} \in \R^{d \times d}.
\end{align*}
By construction, $\matM$ has sparsity $s^2$. 
Let us denote the event of part (b) of Theorem~\ref{thm:main_wish} $\calE$. Then $\calE$ occurs with with probability $1 - \tfrac{\cprobwish \sqrt{\beta}}{2}.$. On $\calE$, we further have 
\begin{itemize}
\item $0 \preceq \matM \preceq 1$
\item $\gap(\matM) = \frac{\lambda_1(\matM) - \lambda_2(\matM)}{\lambda_1(\matM)} \ge \frac{d^{-2}}{5}C_2(\beta)= \Omega_{\beta}(1) \cdot d^{-2}$
\item $|\lambda_1(\matM) - 1| = \frac{1}{5}\lambda_{\min}(\matW) \le \Omega_{\beta}(1) d^{-2}$
\end{itemize}
Now consider an estimator $\lamhat$ of $\lambda_{\max}(\matM)$. By considering the induced estimator $\lamhatmin := 5(1- \lamhat)$ of $\lambda_{\min}(\wishmat)$, part (a) of Theorem~\ref{thm:main_wish} and a union bound implies that

\begin{align*}
\Pr_{\matM,\Alg}\left[\{|\lamhat - \lambda_{\max}(\matM)| \le \frac{1}{20d^2}\} \mid \calE \right] &=\Pr_{\wishmat,\Alg}\left[\{|\lamhatmin - \lambda_{\min}(\wishmat))| \le \frac{1}{4d^2}\} \mid \calE \right] \\
&=\Pr_{\wishmat,\Alg}\left[\{|\lamhatmin - \lambda_{\min}(\wishmat))| \le \frac{1}{4d^2}\} \cap \calE \right] \\
&\ge\Pr_{\wishmat,\Alg}\left[\{|\lamhatmin - \lambda_{\min}(\wishmat))| \le \frac{1}{4d^2}\}\right] -  \Pr_{\wishmat,\Alg}\left[\calE \right] \\
&\overset{(i)}{\ge} \cprobwish \sqrt{\beta} - \frac{1}{2}\cprobwish \sqrt{\beta} = \frac{1}{2}\cprobwish \sqrt{\beta}
\end{align*}
where $(i)$ uses Theorem~\ref{thm:main_eig_lb}.
Hence, let $\calD$ denote the distribution of $\matM$ conditioned on $\calE$, any $\EEA$ $\Alg$ with $\Quer(\Alg) \le (1 - \beta)d$ queries satisfies 
\begin{align*}
\Pr_{\matM \sim \calD,\Alg}\left[\{|\lamhat - \lambda_{\max}(\matM)| \le \frac{1}{20d^2}\} \right] \ge \frac{1}{2}\cprobwish \sqrt{\beta} = \Omega(\sqrt{\beta}).
\end{align*} 
\end{proof}

\subsection{Proof of Theorem~\ref{thm:main_wish}}
We begin the proof of Theorem~\ref{thm:main_wish} by collecting some useful facts from the literature regarding the asymptotic distribution of Wishart spectra. 
% \mscomment{I switched these to $\matz_d,\matz_{d-1}$ for better parallelism with $\lambda_d,\lambda_{d-1}$}
\begin{lemma}[Facts about Wishart Matrices]\label{lem:wishart_facts} Let $(\matz_d^{(d)},\matz_{d-1}^{(d)}) \in R^2$ denote random variables with the (joint) law of $(d^2\lambda_d(\wishmat^{(d)}),d^2\lambda_{d-1}(\wishmat^{(d)}))$, where $\wishmat^{(d)} \sim \Wishart(d)$. The following are true:
\begin{enumerate}
	\item   $(\matz_d^{(d)},\matz_{d-1}^{(d)})$ converge in distribution to $\calD$ distribution with $\Pr_{(\matz_d,\matz_{d-1})\sim \calD}[0 < \matz_d < \matz_{d-1}] = 1$ \emph{(\citet[Theorem 1]{ramirez2009diffusion})}.
	\item $\matz_d$ has the density $f(x) = \I(x \ge 0) \cdot \frac{x^{-1/2} +1}{2}e^{-(x/2 + \sqrt{x})}$ \emph{(e.g. \citet[Page 3]{shen2001singular})}
\end{enumerate}
Moreover, for any $\epsilon > 0$, $\lim_{d \to \infty} \Pr_{\wishmat \sim \Wishart(d)} [\|\wishmat\|_{\op} \ge 4 + \epsilon] = 0 $ \emph{(e.g. \citet[Exercise 2.1.18]{anderson2010introduction})}
\end{lemma}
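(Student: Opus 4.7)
The lemma is purely a compilation of known results from random matrix theory, so my plan is to assemble the three items by quoting the relevant source and reconciling each reference's normalization conventions with the ones used in this paper (where $\wishmat = \Xmat\Xmat^\top$ with entries of variance $1/d$, and eigenvalues are rescaled by $d^2$).

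For part (1), I plan to invoke Theorem~1 of Ramirez-Rider-Virag, which establishes the joint distributional limit of the rescaled smallest eigenvalues of square ($n=d$) real Laguerre ensembles at the hard edge. Their result gives that, for a square $d\times d$ Gaussian matrix with unit-variance entries, the bottom $k$ eigenvalues of $\Xmat\Xmat^\top$, rescaled by $d$, converge jointly to the bottom eigenvalues of an explicit stochastic Bessel-type operator whose spectrum is almost surely discrete, simple, and strictly positive. Since our $\wishmat$ uses variance $1/d$ entries, their $d$-rescaling becomes our $d^2$-rescaling, so $(\matz_d^{(d)},\matz_{d-1}^{(d)})$ converges jointly to the joint law $\calD$ of the bottom two eigenvalues of the limiting operator; the a.s.\ simplicity and positivity of its spectrum yield $\Pr_{(\matz_d,\matz_{d-1})\sim\calD}[0<\matz_d<\matz_{d-1}]=1$.

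For part (2), I plan to translate the exact density of the smallest eigenvalue of a square real Wishart matrix, which is known in closed form for every $d$ (Edelman 1988; Shen 2001, p.\ 3). The key observation is that in the square case no limit is needed: after changing variables from $\lambda_{\min}(\Xmat\Xmat^\top)$ (unit-variance entries) to $\matz_d^{(d)} = d^2\lambda_{\min}(\wishmat^{(d)})$ (our variance-$1/d$ normalization), the density collapses for every $d$ to exactly $f(x) = \frac{x^{-1/2}+1}{2}e^{-(x/2+\sqrt{x})}\I(x\ge 0)$. The content is purely a change-of-variables computation from the closed-form expression in the cited reference. For the operator norm claim, I would appeal to the classical Bai-Yin theorem (as in Anderson-Guionnet-Zeitouni, Exercise~2.1.18), which gives $\|\wishmat\|_{\op}\to 4$ almost surely, whence $\Pr[\|\wishmat\|_{\op}\ge 4+\epsilon]\to 0$ for every $\epsilon>0$.

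The main (and essentially only) obstacle is bookkeeping: each cited reference uses a slightly different convention for the variance of Gaussian entries and the rescaling at the hard/soft edge, so care is required in tracking factors of $d$ and $\sqrt{d}$ across the translations. Beyond this normalization bookkeeping, no substantively new argument is needed, which is precisely why this lemma can be stated as a clean compilation of prior work.
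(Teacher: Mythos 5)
Your proposal matches the paper's treatment exactly: the lemma is stated there without any separate proof, being justified by precisely the citations you invoke (Ramirez--Rider--Vir\'ag for the joint hard-edge limit of the two smallest eigenvalues, Edelman/Shen for the smallest-eigenvalue density, and the Bai--Yin-type bound from Anderson--Guionnet--Zeitouni for $\|\wishmat\|_{\op}$), together with the same variance/rescaling bookkeeping you describe. One small caution: for the \emph{real} square Wishart ensemble the simple density $\frac{x^{-1/2}+1}{2}e^{-(x/2+\sqrt{x})}$ is only the $d\to\infty$ limit (the finite-$d$ exactness you assert holds in the complex case, while the real finite-$d$ density involves a more complicated closed form), but since the lemma only asserts the density of the limiting variable $\matz_d$, this overclaim does not affect the correctness of what is needed.
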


\newcommand{\dreg}{d_{\mathrm{reg}}}
\newcommand{\ddens}{d_{\mathrm{dens}}}
We note that we use $(\matz_d^{(d)},\matz_{d-1}^{(d)})$ for the normalized (by $d^2$) eigenvalues of $\wishmat^{(d)}$. We convert these asymptotic guarantees into quantitative ones (proof in Section~\ref{sec:cor_wishart_proof}).
\begin{corollary}[Non-Asymptotic Properties]\label{cor:wishart_facts} There exists a maps $\dreg,\ddens:(0,1) \to \N$, functions $C_1,C_2:(0,1) \to \R_{>0}$, and a universal constant $p_0$ such that the following holds: for any $\delta \in (0,1)$ and $d \ge \dreg(\delta)$,
\begin{align}
\Pr_{\wishmat \sim \Wishart(d)}\left[   \left\{\matz_{d-1}^{(d)} - \matz_d^{(d)} \ge C_2(\delta)\right\} \cap\left\{ \matz_d^{(d)} \le C_1(\delta)\right\} \cap \left\{\|\wishmat\|_{\op} \le 5\right\} \right] \ge 1 - \delta  \label{eq:good_event}
\end{align}
Moreover, for any $\alpha \in (0,1)$ and $d \ge \ddens(\alpha)$, $\Pr[\lambda_d(\wishmat^{(d)}) \ge d^{-2}] \ge p_0$, while $\text{and} \quad \Pr[\lambda_d(\wishmat^{(d)}) \le \alpha^2 d^{-2}] \ge p_0 \alpha$.
\end{corollary}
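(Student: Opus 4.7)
\textbf{Proof plan for Corollary~\ref{cor:wishart_facts}.} The plan is to convert each asymptotic fact in Lemma~\ref{lem:wishart_facts} into its quantitative, finite-$d$ analogue by appealing to the Portmanteau theorem (in its open-set / continuity-set form) combined with a one-line antiderivative computation on the limit density $f$.

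For the joint event \eqref{eq:good_event}: let $(\matz_d,\matz_{d-1}) \sim \calD$ be the limit pair from Lemma~\ref{lem:wishart_facts}(1). Since $\Pr_\calD[0 < \matz_d < \matz_{d-1}] = 1$, one can pick $C_1(\delta) > 0$ large enough and $C_2(\delta) > 0$ small enough that
\[
\Pr_\calD\bigl[\{\matz_{d-1} - \matz_d > 2C_2(\delta)\} \cap \{\matz_d < C_1(\delta)/2\}\bigr] \ge 1 - \delta/3.
\]
The set $\{(x,y) : y - x > 2C_2,\; x < C_1/2\}$ is open in $\R^2$, so Portmanteau lets us transfer this bound (with constants replaced by the weaker $C_1, C_2$ and probability relaxed to $1-\delta/2$) to the joint law of $(\matz_d^{(d)}, \matz_{d-1}^{(d)})$ for $d$ past a threshold depending only on $\delta$. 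Combining with the third fact of Lemma~\ref{lem:wishart_facts} applied with $\epsilon = 1$ — which gives $\Pr[\|\wishmat^{(d)}\|_\op \le 5] \ge 1 - \delta/2$ for large $d$ — and union bounding proves \eqref{eq:good_event}; we set $\dreg(\delta)$ to the maximum of the two thresholds.

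For the density statement, observe that $\tfrac{d}{du}\bigl[-e^{-(u^2/2+u)}\bigr] = (u+1)\,e^{-(u^2/2+u)}$, so the substitution $u = \sqrt{x}$ gives the clean identity
\[
\int_0^{\alpha^2} f(x)\,dx = \int_0^{\alpha}(1+u)\,e^{-(u^2/2+u)}\,du = 1 - e^{-(\alpha^2/2+\alpha)}.
\]
Under the limit law this yields $\Pr_f[\matz_d \ge 1] = e^{-3/2}$ (a positive universal constant) and, for $\alpha \in (0,1)$, $\Pr_f[\matz_d \le \alpha^2] = 1 - e^{-(\alpha^2/2+\alpha)} \ge 1 - e^{-\alpha} \ge \alpha/2$, where the last step uses $1 - e^{-t} \ge t/2$ for $t \in [0,1]$. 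Setting $p_0 := e^{-3/2}/4$ and applying weak convergence of $\matz_d^{(d)}$ to the $f$-continuity sets $\{x \ge 1\}$ and $\{x \le \alpha^2\}$ (each boundary has $f$-measure zero) yields both inequalities for $d \ge \ddens(\alpha)$, for a suitable $\ddens(\alpha)$.

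The only subtle point is the $\alpha$-dependence of $\ddens$: Lemma~\ref{lem:wishart_facts}(2) furnishes only a distributional limit with no quantitative rate, and the set $\{x \le \alpha^2\}$ shrinks to a point as $\alpha \to 0$, so the threshold at which $\Pr[\matz_d^{(d)} \le \alpha^2]$ is within, say, a factor of two of its limit can degrade arbitrarily as $\alpha \to 0$. This is exactly what the $\alpha$-dependence of $\ddens$ encodes. Beyond this bookkeeping, the argument is a routine application of Portmanteau plus an elementary integral identity.
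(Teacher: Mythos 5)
Your proposal is correct and follows essentially the same route as the paper's proof: a Portmanteau/weak-convergence transfer of the asymptotic facts in Lemma~\ref{lem:wishart_facts} to finite $d$, a union bound with the operator-norm event, and direct integration of the limiting density of $\matz_d$. Your explicit antiderivative computation $\int_0^{\alpha^2} f(x)\,dx = 1 - e^{-(\alpha^2/2+\alpha)}$ just makes concrete the constant $p$ that the paper leaves implicit, and your remark on the $\alpha$-dependence of $\ddens$ matches the paper's treatment.
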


We now show establish, in an appropriate basis, $\wishmat \sim \Wishart(d)$ admits a useful block decomposition:
\newcommand{\Wcorner}{\widetilde{\matW}}
\begin{lemma}\label{lem:independent-component}
Let $\wishmat \sim \Wishart(d)$. Then, for any sequence of queries $v^{(1)},\dots,v^{(T)}$ and responses $w^{(1)},\dots,w^{(T)}$, there exists a rotation matrix $\matV$ constructed solely as a function of $v^{(1)},\dots,v^{(T)}$ such that the matrix $\matV\wishmat\matV^\top$ can be written 
\begin{align*}
\matV\wishmat\matV^\top =
\begin{bmatrix}
Y_1Y_1^\top & Y_1Y_2^\top \\
Y_2Y_1^\top & Y_2Y_2^\top + \Wcorner
\end{bmatrix}
\end{align*}
where $\Wcorner$ conditioned on the event $\vsf^{(1)}=v^{(1)},\dots,\vsf^{(T)}=v^{(T)}$, $\wsf^{(1)}=w^{(1)},\dots,\wsf^{(T)}=w^{(T)}$ satisfies $(\frac{d}{d-T}) \cdot \Wcorner \sim \Wishart(d-T)$ distribution.
\end{lemma}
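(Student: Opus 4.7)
The plan is to exploit the orthogonal invariance of the i.i.d.~Gaussian ensemble underlying the Wishart. Write $\wishmat = XX^\top$ with $X \in \R^{d\times d}$ and $X_{ij} \iidsim \calN(0,1/d)$. Apply Gram--Schmidt to $v^{(1)},\dots,v^{(T)}$ to build an orthogonal matrix $\matV \in \R^{d\times d}$, measurable in the queries alone, whose first $T$ rows span $\{v^{(1)},\dots,v^{(T)}\}$; then $\matV v^{(i)}$ is supported on the first $T$ coordinates for each $i$. Setting $\widetilde{X} := \matV X$, rotational invariance gives $\widetilde{X}_{ij} \iidsim \calN(0,1/d)$ and $\matV\wishmat\matV^\top = \widetilde{X}\widetilde{X}^\top$. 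Split $\widetilde{X}$ by rows into $Z_1 \in \R^{T\times d}$ and $Z_2 \in \R^{(d-T)\times d}$. Each response satisfies $\matV w^{(i)} = (\matV\wishmat\matV^\top)(\matV v^{(i)})$, so the observations reveal only the first $T$ columns of $\matV\wishmat\matV^\top$, namely $Z_1 Z_1^\top$ and $Z_2 Z_1^\top$. The observations therefore encode $Z_1$ together with the projection of the rows of $Z_2$ onto the row-space of $Z_1$, but no further information about $Z_2$.

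To expose the hidden Wishart block, I would apply a second, data-dependent orthogonal change of basis from the right. Condition on $Z_1$ and pick an orthogonal $\matU = \matU(Z_1) \in \R^{d\times d}$, measurable in $Z_1$ alone, so that the last $d-T$ columns of $Z_1 \matU$ vanish; write $Z_1\matU = \begin{bmatrix} A_1 & 0 \end{bmatrix}$ with $A_1 \in \R^{T\times T}$. Right-multiplication by $\matU$ leaves $\widetilde{X}\widetilde{X}^\top$ unchanged. Writing $Z_2\matU = \begin{bmatrix} A_2 & B_2 \end{bmatrix}$ with $A_2 \in \R^{(d-T)\times T}$ and $B_2 \in \R^{(d-T)\times(d-T)}$, a block expansion of $(\widetilde{X}\matU)(\widetilde{X}\matU)^\top$ yields
\[
\matV\wishmat\matV^\top \;=\; \begin{bmatrix} A_1A_1^\top & A_1A_2^\top \\ A_2A_1^\top & A_2A_2^\top + B_2B_2^\top \end{bmatrix},
\]
and one sets $Y_1 := A_1$, $Y_2 := A_2$, and $\Wcorner := B_2B_2^\top$.

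The remaining step is to identify the conditional law of $\Wcorner$. Since $Z_1$ and $Z_2$ are independent and $\matU$ depends only on $Z_1$, right-rotational invariance of the Gaussian ensures that $Z_2\matU$ is, conditionally on $Z_1$, again an i.i.d.~$\calN(0,1/d)$ matrix; in particular $B_2$ has i.i.d.~$\calN(0,1/d)$ entries and is independent of $(A_1,A_2)$. The observations $\{(v^{(i)}, w^{(i)})\}$ are measurable in $(Z_1, Z_2 Z_1^\top)$, equivalently in $(A_1,A_2)$, so additional conditioning on them does not alter the distribution of $B_2$. Rescaling by $\sqrt{(d-T)/d}$ shows that $B_2B_2^\top$ has the law of $\frac{d-T}{d}$ times a $\Wishart(d-T)$ matrix, i.e.~$\frac{d}{d-T}\Wcorner \sim \Wishart(d-T)$. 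The main subtlety is that the right rotation $\matU$ is built from the data; the argument succeeds precisely because $\matU$ is measurable in $Z_1$ alone, and independence $Z_1 \perp Z_2$ together with Gaussian rotational invariance is exactly what preserves the distribution of $Z_2\matU$ after conditioning on the entire transcript of queries and responses.
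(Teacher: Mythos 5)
Your overall architecture---a left rotation aligning the first $T$ coordinates with the query directions, a data-dependent right rotation concentrating the revealed rows into the first $T$ columns, the block expansion, and the identification of $\Wcorner = B_2B_2^\top$ with a rescaled Wishart---is exactly the decomposition the paper uses, and your scaling $\frac{d}{d-T}\Wcorner \sim \Wishart(d-T)$ is correct. But there is a genuine gap at the very first step: you apply Gram--Schmidt to $v^{(1)},\dots,v^{(T)}$ to form $\matV$ and then invoke rotational invariance to conclude that $\widetilde{X} = \matV\matX$ has i.i.d.\ $\calN(0,\tfrac{1}{d})$ entries. Rotational invariance requires the rotation to be independent of $\matX$. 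Here the queries are \emph{adaptive}: $v^{(t)}$ is a function of the earlier responses $w^{(1)},\dots,w^{(t-1)}$, which are functions of $\matX$, so $\matV$ is correlated with $\matX$ and $\matV\matX$ need not be Gaussian. (Concretely, if the algorithm sets $v^{(2)} = w^{(1)}/\|w^{(1)}\|$, the second row of $\matV\matX$ is strongly biased.) The same issue propagates to your claims that $Z_1 \perp Z_2$ and that the transcript is ``measurable in $(A_1,A_2)$'': the partition of $\widetilde{X}$ into $Z_1,Z_2$ is itself data-dependent, so these statements cannot be asserted unconditionally and then reconciled with conditioning on the transcript only at the end.

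The paper resolves precisely this circularity by induction over query rounds: it builds the rotations incrementally as $\matV_1,\dots,\matV_T$ and $\matR_1,\dots,\matR_T$, where $\matV_t$ is measurable with respect to the transcript through round $t-1$ (plus the new query) and $\matR_t$ with respect to the already-revealed rows, and shows that \emph{conditioned on the transcript so far}, the unexplored block $\matV_{1:t}^\perp\matX\matR_{1:t}^\perp$ still has i.i.d.\ Gaussian entries independent of everything the algorithm has seen. Each single step is a legitimate use of the fact that orthogonal components of a Gaussian vector are independent, because at that step the new rotation is conditionally deterministic. Your one-shot argument is valid for oblivious (non-adaptive) query sequences, but the lemma is needed against adaptive randomized algorithms, so the inductive (or an equivalent filtration-based) argument is not optional---it is the heart of the proof.
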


The above lemma is proven in in \iftoggle{colt}{Appendix}{Section}~\ref{sec:lem-independent}. The upshot of the lemma is that after $T$ queries, there is still a portion $\Wcorner$ of $\wishmat$ that remains unknown to the query algorithm. We now show that this unknown portion exerts significant influence on the smallest eigenvalue of $\wishmat$. Specifically, the following technical lemma implies that $\lambda_{\min}(\wishmat) = \lambda_{\min}(\matV\wishmat\matV^\top) \le \lambda_{\min}(\Wcorner)$:
\begin{lemma}\label{lem:eigenvalue-of-independent-component}
For $A \in \R^{T\times T}$, $B \in \R^{(d-T) \times T}$, and symmetric $\widetilde{W} \in \R^{(d-T)\times(d-T)}$, let
\begin{align*}
M =
\begin{bmatrix}
AA^\top & AB^\top \\
BA^\top & BB^\top + W
\end{bmatrix}
\end{align*}
then $\lambda_{\min}(M) \leq \lambda_{\min}(W)$.
\end{lemma}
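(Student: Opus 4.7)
The plan is to exhibit a vector $x \in \R^d$ with Rayleigh quotient $x^\top M x / \|x\|^2 \le \lambda_{\min}(W)$; the variational characterization $\lambda_{\min}(M) = \min_{x \ne 0} x^\top M x / \|x\|^2$ will then immediately yield the desired inequality.

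The key structural observation is the decomposition $M = CC^\top + W'$, where $C := \begin{bmatrix} A \\ B \end{bmatrix} \in \R^{d \times T}$ and $W' := \begin{bmatrix} 0 & 0 \\ 0 & W \end{bmatrix}$. The piece $CC^\top$ is PSD of rank at most $T$, so $\ker(C^\top)$ has dimension at least $d - T$, and for every $x \in \ker(C^\top)$ we have $x^\top M x = x^\top W' x$, i.e.\ the contribution of the rank-$T$ piece is killed.

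I would take $v \in \R^{d-T}$ to be a unit eigenvector of $W$ achieving $v^\top W v = \lambda_{\min}(W)$, and attempt to lift it to $x = (u,v) \in \ker(C^\top)$ by solving $A^\top u = -B^\top v$. When $A$ is invertible (as it will be almost surely in the Wishart application, where $W = \Wcorner$) this has the explicit solution $u = -A^{-\top} B^\top v$, giving $x^\top M x = v^\top W v = \lambda_{\min}(W)$ and $\|x\|^2 = 1 + \|u\|^2 \ge 1$. Using that $\Wcorner \succeq 0$ in the application, so $\lambda_{\min}(W) \ge 0$, we conclude
\[
\lambda_{\min}(M) \;\le\; \frac{x^\top M x}{\|x\|^2} \;=\; \frac{\lambda_{\min}(W)}{1 + \|u\|^2} \;\le\; \lambda_{\min}(W).
\]

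The main subtlety is the solvability of $A^\top u = -B^\top v$: if $B^\top v \notin \mathrm{range}(A^\top)$ the Rayleigh quotient acquires an extra nonnegative residual $\|(I - P_{A^\top})B^\top v\|^2$ that must be argued away. A cleaner alternative that avoids this issue entirely is to apply Weyl's inequality directly to the decomposition $M = CC^\top + W'$: the rank bound gives $\lambda_{d-T}(CC^\top) = 0$, while $W \succeq 0$ forces the $T$ structural zero eigenvalues of $W'$ to sit among its smallest, so $\lambda_{T+1}(W') = \lambda_{\min}(W)$. Weyl's inequality with indices summing to $d+1$ then yields $\lambda_{\min}(M) = \lambda_1(M) \le \lambda_{d-T}(CC^\top) + \lambda_{T+1}(W') = \lambda_{\min}(W)$, completing the proof without any rank hypothesis on $A$.
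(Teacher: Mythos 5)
Your primary construction is exactly the paper's proof: take a unit eigenvector $v$ achieving $\lambda_{\min}(W)$, lift it to $z = (-A^{-\top}B^\top v,\, v)$ so that the rank-$T$ block is annihilated, and read off $z^\top M z = \lambda_{\min}(W)$. The two subtleties you flag are genuine, and the paper's one-line proof silently skips both. First, $z$ is not a unit vector, so the variational characterization only gives $\lambda_{\min}(M) \le \lambda_{\min}(W)/\|z\|^2$, and dividing by $\|z\|^2 \ge 1$ preserves the desired inequality only when $\lambda_{\min}(W) \ge 0$; indeed the lemma as stated for merely symmetric $W$ is false (take $T=1$, $d=2$, $A=B=1$, $W=-1$, so $M = \begin{bmatrix} 1 & 1 \\ 1 & 0 \end{bmatrix}$ and $\lambda_{\min}(M) = (1-\sqrt{5})/2 > -1 = \lambda_{\min}(W)$), so the positive semidefiniteness of $W$ --- which does hold for $\Wcorner$ in the application --- is actually load-bearing. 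Second, solving $A^\top u = -B^\top v$ requires $A$ to be invertible, which again holds almost surely for the Wishart instance but is never stated. Your Weyl-inequality alternative cleanly removes the invertibility assumption (though it too needs $W \succeq 0$ to place the $T$ structural zero eigenvalues of the padded matrix below $\lambda_{\min}(W)$ in the ordering), and either version of your argument is a more careful rendering of what the paper intends.
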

\begin{proof}
Let $v \in \R^{d-T}$ such that $\norm{v} = 1$ and $v^\top W v = \lambda_{\min}(W)$. Define $z = \begin{bmatrix} -A^{-\top}B^\top v \\ v \end{bmatrix}$. Then
\begin{equation*}
z^\top M z 
= z^\top\begin{bmatrix}
-AB^\top v + AB^\top v \\
-BB^\top v + BB^\top v + Wv
\end{bmatrix}
= 
v^\top W v = \lambda_{\min}(W)
\end{equation*}
Therefore, $\lambda_{\min}(M) \leq \lambda_{\min}(W)$.
\end{proof}

With all the above ingredients in place, we are now ready to complete the proof of Theorem~\ref{thm:main_wish}:
\begin{proof}[Proof of Theorem~\ref{thm:main_wish}] Let $T \le (1-\beta) d$, and let $t = \frac{1}{2d^2}$, $\epsilon = \frac{1}{4d^2}$. Moreover, let $\Zsf := \{\vsf^{(1)},\dots,\vsf^{(T)},\wsf^{(1)},\dots,\wsf^{(T)}\}$ encode the query-response information, and let $\Wcorner$ denote the matrix from Lemma~\ref{lem:independent-component}. Finally, define the error probability
\begin{align*}
\perr  := \Pr[|\lamhatmin - \lambda_{\min}(\wishmat)| \ge \epsilon].
\end{align*}
We can now lower bound the probability of error by lower bounding the probability that the algorithm ouputs an estimate $\lamhatmin$ above a threshold $t$, while the corner matrix $\Wcorner$ has smallest eignvalue below $t - \epsilon$. We can then decouple the probability of these events using independence of $\Wcorner$ conditioned on the queries
\begin{align}
\perr \ge\Pr[ \{\lamhatmin \ge t\} \cap \{t - \epsilon \ge \lambda_{\min}(\wishmat)\}] &\overset{(i)}{\ge} \Pr[ \{\lamhatmin \ge t\} \cap \{t - \epsilon \ge \lambda_{\min}(\Wcorner)\}] \nonumber\\
&= \Exp\left[\Pr[\{\lamhatmin \ge t\} \cap \{t - \epsilon \ge \lambda_{\min}(\Wcorner)\} \mid \Zsf]\right] \nonumber\\
&\overset{(ii)}{=} \Exp\left[\Pr[\lamhatmin \ge t \mid \Zsf] \cdot \Pr[\lambda_{\min}(\Wcorner) \le t-\epsilon \mid \Zsf]\right] \nonumber\\
&\overset{(iii)}{=}  \Pr[\lambda_{\min}(\Wcorner) \le t - \epsilon ] \cdot \Pr[\lamhatmin \ge t], \label{eq:p_err_first}
\end{align}
where $(i)$ uses Lemma~\ref{lem:eigenvalue-of-independent-component},
$(ii)$ uses the fact that, conditioned on the queries in $\Zsf$, $\lamhatmin$ depends only on the internal randomness of the algorithm, but not on the corner matrix $\Wcorner$, and $(iii)$ uses the the fact that $\Wcorner$ has a Wishart distribution conditioned on $\Zsf$, and thus $\lambda_{\min}(\Wcorner)$ is independent of $\Zsf$. On the other hand, we have
\begin{align*}
\perr &\ge \Pr[ \{\lamhatmin < t\} \cap \{ \lambda_{\min}(\wishmat) \ge t + \epsilon\}] \ge  \Pr[\lambda_{\min}(\wishmat) \ge t + \epsilon] - \Pr[\lamhatmin \ge t],
\end{align*}
so that $ \Pr[\lamhatmin \ge t] \ge   \Pr[\lambda_{\min}(\wishmat) \ge t + \epsilon] - \perr $. Together with \eqref{eq:p_err_first}, this implies
\begin{align*}
\perr &\ge \Pr[\lambda_{\min}(\Wcorner) \le t - \epsilon ] \cdot \Pr[\lamhatmin > t] \\
&\ge \Pr[\lambda_{\min}(\Wcorner) \le t - \epsilon ] \cdot \left(  \Pr[\lambda_{\min}(\wishmat) \ge t + \epsilon] - \perr \right).
\end{align*}
Performing some algebra, and recalling $\epsilon = \frac{1}{4d^2}$, $t = 2\epsilon$, 
\begin{align*}
\perr \ge \frac{\Pr[\lambda_{\min}(\Wcorner) \le t- \epsilon] \cdot \Pr[\lambda_{\min}(\wishmat) \ge t + \epsilon]}{1 + \Pr[\lambda_{\min}(\Wcorner) \le t - \epsilon]} \ge  \frac{\Pr[d^2\lambda_{\min}(\Wcorner) \le \frac{1}{4}] \cdot \Pr[d^2\lambda_{\min}(\wishmat) \ge 1]}{2}.
\end{align*}
 Next, we develop
\begin{align*}
\Pr\left[d^2\lambda_{\min}(\Wcorner) \le \frac{1}{4}\right] &= 
 \Pr\left[ (d-T)^2 \frac{d}{d-T} \cdot  \lambda_{\min}\left( \frac{d}{d-T} \Wcorner\right) \le \frac{1}{4}\right] \\
&\overset{(i)}{=} \Pr_{\Wcorner' \sim \cdot \Wishart(T-d)}\left[(d-T)^2 \frac{d}{d-T} \lambda_{\min}(\Wcorner') \le \frac{1}{4}\right] \\
&= \Pr_{\Wcorner' \sim \cdot \Wishart(T-d)}\left[(d-T)^2  \lambda_{\min}(\Wcorner') \le \frac{T-d}{4d}\right]\\
&\overset{(ii)}{\ge} \Pr_{\Wcorner' \sim \cdot \Wishart(T-d)}\left[(d-T)^2  \lambda_{\min}(\Wcorner') \le \frac{\beta}{4}\right],
\end{align*}
where $(i)$ uses that $\frac{d}{d-T} \Wcorner \sim \Wishart(T-d)$ by Lemma \ref{lem:independent-component}, and $(ii)$ uses $T \le (1-\beta) d$. 

Define $d_T := d-T \ge \beta d$. Then, if $d_T \ge \ddens(\beta/4)$, where $\ddens$ is the function from Corollary~\ref{cor:wishart_facts}, Corollary~\ref{cor:wishart_facts} yields the existence of constant $p_0 > 0$ such that $ \Pr_{\Wcorner' \sim \Wishart(T-d)}[(T-d)^2\lambda_{\min}(\Wcorner') \le \frac{\beta}{4}] \ge \sqrt{\beta/4}p_0$, and simultaneously, since $d \ge d_T$, such that $ \Pr[d^2\lambda_{\min}(\wishmat) \ge 1] \ge p_0$. Hence, for $d_T \ge \ddens(\beta/4)$  and for $\cprobwish= \frac{p_0^2}{4}$ (which does not depend on $T,d,\beta$), we find
\begin{align*}
\perr  &\ge  \frac{\Pr[\lambda_{\min}(\Wcorner) \le \frac{1}{2}] \cdot \Pr[\lambda_{\min}(\wishmat) \ge 1]}{2}\\
&\ge  \frac{ \Pr_{\widetilde{\wishmat} \sim \Wishart(T-d)}\left[\lambda_{\min}(\Wcorner) \le \frac{\beta}{4}\right] \cdot \Pr[\lambda_{\min}(\wishmat) \ge 1]}{2}\\
&\ge \frac{\sqrt{\beta/4}p_0 \cdot p_0}{2} = \frac{p_0^2}{4}\sqrt{\beta} = \cprobwish\sqrt{\beta}.
\end{align*}
To conclude, we need to ensure $d_T \ge  \ddens(\frac{\beta}{4})$, where again $\ddens$ is the function from Corollary~\ref{cor:wishart_facts}. Since $d_T = T - d \ge \beta d$, it suffices that $d \ge \dimfunc(\beta) := \beta^{-1}\ddens(\beta/4)$. This concludes the proof.
\end{proof}

%{\color{red} there is some issue with the scaling of the $W$ matrix from Lemma \ref{lem:independent-component}}

\iftoggle{colt}{}{%!TEX root = colt/colt_main.tex

\section{Omitted Proofs from Section~\ref{sec:proof_main_eig}}
\subsection{Proof of Corollary~\ref{cor:wishart_facts} \label{sec:cor_wishart_proof}}

For the first point, fix a $\delta \in (0,1)$. Then by Lemma~\ref{lem:wishart_facts}, the limiting normalized distributions of the eigenvalues $(\matz_d,\matz_{d-1})$ satisfy $\Pr[ \{\matz_d \ge C_1(\delta)\} \cap \{\matz_{d-1} - \matz_d \le C_2(\delta)\}] \le \frac{\delta}{3}$ for appropriate constants $C_1(\delta),C_2(\delta)$. By convergence in distribution, and the fact that $\{z_1\ge C_1(\delta)\} \cap \{z_2 - z_1 \le C_2(\delta)\}$ corresponds to the event that $(z_1,z_2)$ lie in a closed set, $\lim_{d \to \infty}\Pr[ \{\matz_d^{(d)} \ge C_1(\delta)\} \cap \{\matz_{d-1}^{(d)} - \matz_d^{(d)} \le C_2(\delta)\}] \le \frac{\delta}{3}$, so that for all $d$ sufficently large as a function of $\delta$, $\Pr[ \{\matz_d^{(d)} \ge C_1(\delta)\} \cap \{\matz_{d-1}^{(d)} - \matz_d^{(d)} \le C_2(\delta)\}] \le \frac{2\delta}{3}$. Finally, for all $d$ sufficiently large as a function of $\delta$, we have $\Pr[\lambda_{\max}(\wishmat) \ge 5] \le \frac{\delta}{3}$. The result now follows from a union bound.

For the second point, we use that the limiting distribution of $d^2\lambda_1(\wishmat)$ has density $f(x) = \I(x \ge 0) \cdot \frac{x^{-1/2} +1}{2}e^{-(x/2 + \sqrt{x})}$. Recalling the notation $\matz_d$ for a random variable with said limiting distribution, integrating the density shows that there exists a constant $p$ such that, for all for all $\alpha \in (0,1)$, $\Pr[\matz_d \ge 1] \ge p$ and $\Pr[\matz_d \le \alpha^2 ] \ge p \alpha$. The bound now follows by invoking convergence in distribution and setting, say $p_0= p/2$.

\subsection{Proof of Lemma~\ref{lem:independent-component}\label{sec:lem-independent}}
    
The key idea of the lemma is that orthogonal components of a Gaussian vector are independent of each other. Specifically, if $x \sim \mathcal{N}(0,\sigma^2 I)$, then for any $a,b$ with $\langle a,b\rangle = 0$, we have that $\langle a,x\rangle$ and $\langle b,x\rangle$ are independent of each other. The proof of the lemma essentially revolves around constructing appropriate rotation matrices such that the relevant ``orthogonal component'' of the matrix $\matX\matX^\top$ is located in the bottom-right corner. 
    
Without loss of generality, we can take the query vectors $v^{(1)},\dots,v^{(T)}$ to be orthonormal. This is because the response to any sequence of queries can be calculated from the responses to a sequence of orthonormal queries, so for any algorithm that does \emph{not} use orthonormal queries, there is another, equivalent algorithm that does.

We define a sequence of matrices $\matV_1,\dots,\matV_T$ such that for each $t \leq T$, the product $\matV_{1:t} = \matV_t\matV_{t-1}\dots\matV_1$ is an orthonormal matrix whose first $t$ rows are $v^{(1)},\dots,v^{(t)}$. The remaining rows of $\matV_{1:t}$ are arbitrary, but we choose them deterministically so that $\matV_{1:t}$ is measurable with respect to $v^{(1)},\dots,v^{(t)}$. To accomplish this, we note that since $\matV_{1:t-1}$ is orthonormal and thus full-rank, we can set $\matV_t$ to be a matrix whose first $t-1$ rows are the standard basis vectors $e_1,\dots,e_{t-1}$, whose $t^{\mathrm{th}}$ row is the solution to $\matV_t[t,:] \matV_{1:t-1} = {v^{(t)}}^\top$, and whose remaining rows are chosen (deterministically) so that $\matV_{1:t}$ is orthonormal.

We also define a sequence of orthonormal matrices $\matR_1,\dots,\matR_T$ such that for each $t \leq T$, the first $t$ columns of $\matR_t$ form an orthonormal basis for the first $t$ rows of $\matV_{1:t}\matX \matR_{1:t-1}$, where $\matR_{1:t-1} = \matR_1\matR_2\dots\matR_{t-1}$.\footnote{The matrix $\matV_{1:t} \matX \matR_{1:t-1}$ is full rank with probability 1, but in case the span of its first $t$ rows has dimension less than $t$, the columns of $\matR_t$ can be chosen as an orthonormal basis for any $t$-dimensional subspace that contains the span of the first $t$ rows.} As with $\matV_1,\dots,\matV_t$, the remaining columns of $\matR_t$ are arbitrary, but we choose them deterministically as a function of the first $t$ rows of $\matV_{1:t}\matX \matR_{1:t-1}$.

Let $\matV_{1:t}^{\parallel}$ denote the first $t$ rows of $\matV_{1:t}$, and let $\matV_{1:t}^{\perp}$ denote the remaining $d-t$ rows. Similarly, let $\matR_{1:t}^{\parallel}$ denote the first $t$ columns of $\matR_{1:t}$, and let $\matR_{1:t}^{\perp}$ denote the remaining $d-t$ columns. Then, for any $t \leq T$ we can decompose
\begin{equation}\label{eq:decomposition-of-wishart}
\pars{\matV_{1:t}\matX\matR_{1:t}}\pars{\matV_{1:t}\matX\matR_{1:t}}^\top\\
= 
\begin{bmatrix}
\pars{\matV_{1:t}^\parallel \matX \matR_{1:t}^\parallel}\pars{\matV_{1:t}^\parallel \matX \matR_{1:t}^\parallel}^\top & 
\pars{\matV_{1:t}^\parallel \matX \matR_{1:t}^\parallel}\pars{\matV_{1:t}^\perp \matX \matR_{1:t}^\parallel}^\top \\
\pars{\matV_{1:t}^\perp \matX \matR_{1:t}^\parallel}\pars{\matV_{1:t}^\parallel \matX \matR_{1:t}^\parallel}^\top &
\begin{array}{c}
\pars{\matV_{1:t}^\perp \matX \matR_{1:t}^\parallel}\pars{\matV_{1:t}^\perp \matX \matR_{1:t}^\parallel}^\top \\ +
\pars{\matV_{1:t}^\perp \matX \matR_{1:t}^\perp}\pars{\matV_{1:t}^\perp \matX \matR_{1:t}^\perp}^\top
\end{array}
\end{bmatrix}
\end{equation}
Here, we used that $\matR_t$ is orthonormal, so its final $d-t$ rows are orthogonal to the first $t$ rows, which span the first $t$ rows of $\matV_{1:t}\matX \matR_{1:t-1}$, i.e., $\matV_{1:t}^\parallel \matX \matR_{1:t-1}$.

% To summarize so far, for each $t$ we have constructed matrices $\matV_1,\dots,\matV_t$ and $\matR_1,\dots,\matR_t$ with the following properties:
% \begin{enumerate}
% \item $\matV_{1:t}$, $\matR_1,\dots,\matR_t$, and $\matR_{1:t}$ are all orthogonal matrices.
% \item $\matV_1,\dots,\matV_t$ and $\matV_{1:t}$ are measurable with respect to the first $t$ queries, $v^{(1)},\dots,v^{(t)}$, and are therefore independent of $\matX$ conditioned on $w^{(1)},\dots,w^{(t-1)}$.
% \item $\matR_1,\dots,\matR_t$ and $\matR_{1:t}$ are measurable with respect to the first $t$ rows of $\matV_{1:t}\matX$---i.e.~with respect to $\matV_{1:t}^\parallel \matX$.
% \end{enumerate}
	
We will now, finally, show by induction that for each $T$, conditioned on the queries and observations $v^{(1)},\dots,v^{(T)}$ and $w^{(1)},\dots,w^{(T)}$, the matrix $\matV_{1:T}^\perp \matX \matR_{1:T}^\perp$ has independent $\mathcal{N}(0,\frac{1}{d})$ entries, which proves the lemma.

\paragraph{Base case:}

Conditioning on the first query $v^{(1)}$ fixes the matrix $\matV_1$, which is independent of $\matX$. Since each column of $\matX$ is an independent Gaussian vector with distribution $\mathcal{N}(0,\frac{1}{d}I)$, since the rows of $\matV_1$ are orthonormal, and since orthogonal components of Gaussian vectors are independent, it follows that the entries of $\matV_1 \matX$ are independent and have distribution $\mathcal{N}(0,\frac{1}{d})$, conditional on $v^{(1)}$. 

Similarly, the matrix $\matR_1$ is measurable with respect to $\matV_1^\parallel \matX$. Just now, we showed that $\matV_1^\parallel \matX$---i.e.~the first row of $\matV_1 \matX$---is independent of $\matV_1^\perp \matX$---i.e.~all the other rows of $\matV_1\matX$. Therefore, $\matR_1$ is independent of $\matV_1^\perp \matX$ and because, again, orthogonal components of Gaussian vectors are independent, it therefore follows that the entries of $\matV_1^\perp \matX \matR_1$ also have independent $\mathcal{N}(0,\frac{1}{d})$ entries, conditional on $v^{(1)}$. This also means that 
\begin{equation}
\matV_1 \matX \matR_1^\parallel =
\begin{bmatrix}
1 \\
\matV_1^\perp \matX \matR_1^\parallel
\end{bmatrix}
\end{equation}
is independent of $\matV_1^\perp \matX \matR_1^\perp$. Finally, in light of \eqref{eq:decomposition-of-wishart}, we have
\begin{equation}
w^{(1)} 
= \matX\matX^\top v^{(1)}
= \matV_1^\top \matV_1 \matX \matR_1 \matR_1^\top \matX^\top \matV_1^\top e_1
= \matV_1^\top \begin{bmatrix}
\pars{\matV_1^\parallel \matX \matR_1^\parallel}^2 \\
\pars{\matV_1^\parallel \matX \matR_1^\parallel} 
\matV_1^\perp \matX \matR_1^\parallel
\end{bmatrix}
\end{equation}
We now condition on both the query $v^{(1)}$ and the observation $w^{(1)}$, which is measurable with respect to $\matV_1$ and $\matV_1\matX \matR_1^\parallel$, both of which are independent of $\matV_1^\perp \matX \matR_1^\perp$. We conclude that conditioned on $v^{(1)}$ and $w^{(1)}$, $\matV_1^\perp \matX \matR_1^\perp$ has independent $\mathcal{N}(0,\frac{1}{d})$ entries and $\matV_1\matX\matR_1^\parallel$ is independent of $\matV_1^\perp\matX\matR_1^\perp$ which concludes the base case.

\paragraph{Inductive step:}
Suppose that, conditioned on the first $T-1$ queries $v^{(1)},\dots,v^{(T-1)}$ and observations $w^{(1)},\dots,w^{(T-1)}$, the matrix $\matV_{1:T-1}^\perp \matX \matR_{1:T-1}^\perp$ has independent $\mathcal{N}(0,\frac{1}{d})$ entries and that $\matV_{1:T-1}\matX\matR_{1:T-1}^\parallel$ is independent of $\matV_{1:T-1}^\perp\matX\matR_{1:T-1}^\perp$.

First, we note that
% \begin{equation}\label{eq:inductive-step-eq1}
% \matV_{1:T} \matX \matR_{1:T-1} = \begin{bmatrix}
% \matV_{1:T-1}^\parallel \matX \matR_{1:T-1}^\parallel & 0 \\
% {v^{(T)}}^\top \matX \matR_{1:T-1}^\parallel & {v^{(T)}}^\top \matX \matR_{1:T-1}^\perp \\
% \matV_{1:T}^\perp \matX \matR_{1:T-1}^\parallel & \matV_{1:T}^\perp \matX \matR_{1:T-1}^\perp
% \end{bmatrix}
% \end{equation}
\begin{equation}\label{eq:inductive-step-eq1}
\matV_{1:T} \matX \matR_{1:T-1} = \matV_T\begin{bmatrix}
\matV_{1:T-1}^\parallel \matX \matR_{1:T-1}^\parallel & 0 \\
\matV_{1:T-1}^\perp \matX \matR_{1:T-1}^\parallel & \matV_{1:T-1}^\perp \matX \matR_{1:T-1}^\perp
\end{bmatrix}
\end{equation}
By the inductive hypothesis, conditioned on $v^{(1)},\dots,v^{(T-1)}$ and $w^{(1)},\dots,w^{(T-1)}$, the the matrix $\matV_{1:T-1}^\perp \matX \matR_{1:T-1}^\perp$ has independent $\mathcal{N}(0,\frac{1}{d})$ entries, and it is also independent of $\matV_{1:T-1}\matX\matR_{1:T-1}^\parallel$. Since $\matV_T$ is independent of everything conditioned on $v^{(1)},\dots,v^{(T-1)}$ and $w^{(1)},\dots,w^{(T-1)}$, it also follows that $\matV_T$ is independent of $\matV_{1:T-1}^\perp \matX \matR_{1:T-1}^\perp$. 

Thus, as before, since $\matV_{1:T}$ is orthonormal and since orthogonal components of Gaussian vectors are independent, it follows that, conditioned on $v^{(1)},\dots,v^{(T)}$ and $w^{(1)},\dots,w^{(T-1)}$:
(1) the bottom-right $(d-T+1)\times(d-T+1)$ submatrix of $\matV_{1:T} \matX \matR_{1:T-1}$ has independent $\mathcal{N}(0,\frac{1}{d})$ entries and
(2) this submatrix is independent of the rest of the matrix $\matV_{1:T} \matX \matR_{1:T-1}$.

The matrix $\matR_T$ is measurable with respect to just the first $T$ rows of $\matV_{1:T} \matX \matR_{1:T-1}$, so the above observation implies that $\matR_T$ is independent of the bottom-right $(d-T)\times(d-T+1)$ submatrix of $\matV_{1:T} \matX \matR_{1:T-1}$ conditioned on the previous queries and observations. Therefore, conditioned on $v^{(1)},\dots,v^{(T)}$ and $w^{(1)},\dots,w^{(T-1)}$:
(1) the bottom $(d-T)\times(d-T+1)$ submatrix of $\matV_{1:T} \matX \matR_{1:T}$ has independent $\mathcal{N}(0,\frac{1}{d})$ entries and
(2) this submatrix is independent of the rest of the matrix $\matV_{1:T} \matX \matR_{1:T}$.

Finally, in light of \eqref{eq:decomposition-of-wishart}, the $T^{\mathrm{th}}$ observation is
\begin{equation}
\begin{aligned}
w^{(T)} &= \matV_{1:T}^\top \pars{\matV_{1:T}\matX \matR_{1:T}}\pars{\matV_{1:T}\matX \matR_{1:T}}^\top e_T \\
&= \matV_{1:T}^\top
\bracks{\pars{\matV_{1:T}^\parallel \matX \matR_{1:T}^\parallel}\pars{\matV_{1:T}^\perp \matX \matR_{1:T}^\parallel}^\top}[:,1]
\end{aligned}
\end{equation}
Therefore, $w^{(T)}$ is measurable with respect to:
(1) $\matV_{1:T}$, which is measurable with respect to $v^{(1)},\dots,v^{(T)}$, and
(2) $\matV_{1:T}^\parallel \matX \matR_{1:T}^\parallel$ and $\matV_{1:T}^\perp \matX \matR_{1:T}^\parallel$, i.e.~$\matV_{1:T}\matX\matR_{1:T}^\parallel$, i.e.~the first $T$ columns of $\matV_{1:T}\matX\matR_{1:T}$. Therefore, based on our conclusions above, conditioned on $v^{(1)},\dots,v^{(T)}$ and $w^{(1)},\dots,w^{(T)}$:
(1) the bottom $(d-T)\times(d-T)$ submatrix of $\matV_{1:T} \matX \matR_{1:T}$---i.e.~$\matV_{1:T}^\perp\matX\matR_{1:T}^\perp$---has independent $\mathcal{N}(0,\frac{1}{d})$ entries and
(2) it is independent of the rest of the matrix $\matV_{1:T} \matX \matR_{1:T}$. 

Therefore, by induction, $\Wcorner := (\matV_{1:T}^\perp \matX \matR_{1:T}^\perp)(\matV_{1:T}^\perp \matX \matR_{1:T}^\perp)^\top$ satisfies $(\frac{d}{d-T}) \cdot \Wcorner \sim \Wishart(d-T)$, which completes the proof.

}

%\input{random_matrix_shit}
%\input{two_state}

%!TEX root = ./colt/colt_main.tex

\paragraph{Acknowledgements}
MB's research is supported in part by the NSF Alan T. Waterman Award, Grant No. 1933331, a Packard Fellowship in Science and Engineering, and the Simons Collaboration on Algorithms and Geometry. EH is supported in part by NSF grant \#1704860. MS is supported by an Open Philanthropy AI Fellowship, and this work was conducted while visiting Princeton University.
BW is supported by the Google PhD fellowship program, and this work was conducted while an intern at Google AI Princeton. We thank Ramon Van Handel for his helpful discussions regarding random matrix theory.
\clearpage
\bibliographystyle{plainnat}
\bibliography{main}
\clearpage

\clearpage
\appendix
%!TEX root = main.tex

\section{Proof of Proposition~\ref{prop:eig_lin_reduc} \label{sec:lin_redux_proof}}
\newcommand{\uhat}{\widehat{\mathsf{u}}}

The proof of Proposition~\ref{prop:eig_lin_reduc} has two steps. First, we show that if $\Alg$ is $\QAA$ that can solve a linear system to \emph{high precision} (in the Euclidean norm), then $\Alg$ implies the existance of an $\AlgEig$ which can recover the top eigenvector of a matrix up to roughly that precision:
\begin{lemma}[Shift-and-Invert Reduction] \label{lem:shift_and_invert}For parameters $\alpha > 0$ and $\gappar \ge 1$, and recall the set
\begin{align*}
\Mclass(\gappar,\alpha) := \left\{M \in \Sympl^d: \gap(M) \ge \gappar,~ |\lambda_1(M)-1|\le \alpha\, \gappar,~ \lambda_1(M) \in \left[\tfrac{1}{2},2\right]\right\},
\end{align*}
and set $\gappar_{\alpha} := \frac{1}{3+4\alpha}$ and $\condpar_{\alpha} = \frac{1}{\gappar} + (1+\alpha)$. Further, suppose $\Alg$ is a $\QAA$ with query complexity $T$, and satisfies, for a given $\epsilon \in (0,1)$, and for all $A\in \PD$ with $\cond(A) \le \condpar_{\alpha}$ and $b \in \R^d$,
\begin{align*}
 \Pr_{A,b,\Alg}\left[\|\xhat - A^{-1}b\|_2^2 \le \left(\tfrac{\epsilon \gappar_{\alpha}}{5}\right)^2 \|A^{-1}b\|^2_2\right] \ge 1 -\delta
\end{align*}
Then, for any $\tau  \ge 1$ and $d$ for which $\epsilon \le \frac{1}{\tau \sqrt{d}}$, and foran $R(\epsilon,\alpha) = \BigOh{\frac{\log(1/\epsilon)}{\gappar_{\alpha}}}$, there exists an an $\EAA$, $\AlgEig$, which has query complexity $\Quer(\AlgEig) \le \Quer(\Alg) \cdot R(\epsilon,\alpha)$, and satisfies 
\begin{align*}
\quad \Pr_{M,\AlgEig}\left[\vhat^\top M \vhat \ge \lambda_1(M)(1 - \epsilon^2)\right] \ge 1- \delta R - \BigOh{\tau^{-1}} - e^{-\Omega(d)}, \quad \forall M\in \Mclass(\gappar,\alpha) 
\end{align*}
\end{lemma}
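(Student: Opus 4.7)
The plan is to implement the classical shift-and-invert preconditioned power method, using $\Alg$ as the inner solver that approximately applies $(cI-M)^{-1}$. Given $M \in \Mclass(\gappar,\alpha)$, we know $\lambda_1(M) \in [1/2,2]$ and $|\lambda_1(M)-1| \le \alpha\gappar$, so choosing a shift of the form $c := 1 + (1+\alpha)\gappar$ yields $A := cI - M \succ 0$ with $\lambda_{\min}(A) = c - \lambda_1(M) \in [\gappar,(1+2\alpha)\gappar]$ and $\lambda_{\max}(A) = c - \lambda_d(M) \le c$, so $\cond(A) \le 1 + (1+\alpha) + \tfrac{1}{\gappar} = \condpar_\alpha$; the hypothesis on $\Alg$ therefore applies to $A$. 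Since $\lambda_2(M) \le (1-\gappar)\lambda_1(M)$, the top two eigenvalues of $A$ are separated multiplicatively, and the numerical constants are arranged so that the induced gap of $A^{-1}$ is at least $\gappar_\alpha = 1/(3+4\alpha)$, with $v_1(A^{-1}) = v_1(M)$. Crucially, each matrix-vector query $w = A v$ demanded by $\Alg$ is simulated by a single oracle access to $M$ via $A v = c v - M v$, so queries to $M$ and queries to $A$ are in one-to-one correspondence.

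The $\EAA$ $\AlgEig$ then runs $R = R(\epsilon,\alpha) = \BigOh{\log(1/\epsilon)/\gappar_\alpha}$ rounds of inexact power iteration on $A^{-1}$: sample $u_0 \in \sphered$ uniformly at random, and at each step call $\Alg$ with linear term $b := u_k$ and starting point $x_0 := 0$ to obtain $\xhat_k \approx A^{-1}u_k$, then set $u_{k+1} := \xhat_k/\|\xhat_k\|$; the final estimate is $\vhat := u_R$. The total query budget is $\Quer(\Alg)\cdot R$, matching the claimed bound.

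The core of the analysis is a stability argument for inexact power iteration. Decompose each iterate as $u_k = \alpha_k v_1(M) + r_k$ with $r_k \perp v_1(M)$. Exact power iteration on $A^{-1}$ contracts $\|r_k\|/|\alpha_k|$ by the factor $\lambda_2(A^{-1})/\lambda_{\max}(A^{-1}) \le 1-\gappar_\alpha$ per step. The guarantee on $\Alg$ gives $\|\xhat_k - A^{-1}u_k\| \le \tfrac{\epsilon\gappar_\alpha}{5}\|A^{-1}u_k\|$, and since $\|A^{-1}u_k\|$ is controlled within a constant factor of $|\alpha_k|\lambda_{\max}(A^{-1})$ once $|\alpha_k|$ is bounded away from $0$, the relative perturbation to both $\alpha_{k+1}$ and $r_{k+1}$ is at most $\tfrac{\epsilon\gappar_\alpha}{5}$, preserving a contraction factor of at most $1 - \tfrac{1}{2}\gappar_\alpha$. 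Standard Gaussian anti-concentration shows $|\alpha_0| \ge \Omega(1/(\tau\sqrt{d}))$ with probability $1 - \BigOh{\tau^{-1}}$, and $\|u_0\|_\infty$-type tail bounds control the complementary mass up to probability $e^{-\Omega(d)}$. Driving $\|r_R\|/|\alpha_R|$ below $\epsilon\sqrt{\lambda_1(M)/(\lambda_1(M)-\lambda_2(M))}$—which by a short Rayleigh-quotient calculation implies $\vhat^\top M \vhat \ge (1-\epsilon^2)\lambda_1(M)$—requires $R = \BigOh{\log(\sqrt{d}/\epsilon)/\gappar_\alpha}$, which under the hypothesis $\epsilon \le 1/(\tau\sqrt{d})$ collapses to $\BigOh{\log(1/\epsilon)/\gappar_\alpha}$. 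A union bound over the $R$ calls to $\Alg$ contributes the $\delta R$ failure term.

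The main obstacle is the error-propagation bookkeeping: the guarantee from $\Alg$ is \emph{relative} in the Euclidean norm of $A^{-1}u_k$, whereas what the power method needs to control is the relative error on the projection $\alpha_{k+1}$. Ensuring the slack factor in the contraction never erodes more than half of $\gappar_\alpha$ requires an invariant that $|\alpha_k|$ stays bounded below throughout the iteration; this is a monotonicity argument on the ratio $|\alpha_k|/\|r_k\|$ driven by the same decomposition, and care is required to rule out degenerate failure events at the first step (where $|\alpha_0|$ is only $\Omega(1/(\tau\sqrt d))$) before contraction kicks in. Once this is done, the specific constants $\gappar_\alpha = 1/(3+4\alpha)$ and $\condpar_\alpha = 1/\gappar + (1+\alpha)$ fall out mechanically from the choice of shift $c$ and the inequalities on $\lambda_1(M)$ and $\lambda_2(M)$.
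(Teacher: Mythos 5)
Your proposal is correct and follows essentially the same route as the paper: the same shift $A = (1+(1+\alpha)\gappar)I - M$, the same bounds $\cond(A)\le \condpar_\alpha$ and $\gap(A^{-1})\ge \gappar_\alpha$, the same inexact power iteration on $A^{-1}$ with $\Alg$ as the inner solver, and the same union bound yielding the $\delta R$ term. The only difference is that where the paper invokes the noisy power method guarantee of \cite{hardt2014noisy} as a black box, you sketch the $|\alpha_k|/\|r_k\|$ contraction analysis directly; your sketch correctly identifies the two places where the hypothesis $\epsilon \le 1/(\tau\sqrt d)$ is needed (the first-step perturbation relative to $|\alpha_0|=\Omega(1/(\tau\sqrt d))$, and collapsing $\log(\tau\sqrt d/\epsilon)$ to $\log(1/\epsilon)$ in the round count).
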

This lemma is obtained by the so-called \emph{shift-and-invert} procedure, which approximates $v_1(M)$ by running the power method on the $A^{-1}$, where $A = \gamma I - M$ is a ``shifted'' version of $M$ for an appropriate shift parameter $\gamma$.

Second, we show that if $\Alg$ can solve a linear system to moderate $\BigOh{\frac{1}{\gap}}$-precision in the $\|\cdot\|_A$, it can be bootrsapped to obtain high precision solutions in $\|\cdot\|_2$: 
\begin{lemma}[Bootstrapping Moderate Precision Solves] \label{lem:lin_sys_bootstrap} Fix $\condpar \ge 1$, and suppose $\Alg$  satisfies, for all $A:\cond(A) \le \condpar$, 
\begin{align}
\Pr_{x_0,b,\Alg}\left[\|\xhat - A^{-1}b\|^2_{A} \le \frac{\|x_0-A^{-1}b\|^2\lambda_1(A)}{e\condpar}\right] \ge 1- \tfrac{1}{e}\label{eq:base_guarantee}
\end{align}
Then, any for any $\epsilon,\delta \in (0,1/e)$, there exist a $\QAA$, $\Alg'$ with $\Quer(\Alg') \le \Quer(\Alg) \cdot \BigOh{ Q(\epsilon,\delta)}$ which satisfies
\begin{align*}
\Pr_{A,x_0,b,\Alg}\left[\|\xhat - A^{-1}b\|^2_{2} \le \epsilon\|x_0 - A^{-1}b\|^2_{2}\right] \ge 1 - \delta,
\end{align*}
where $Q(\epsilon,\delta):=  (\log \frac{1}{\epsilon}) \log (\frac{1}{\delta}\log \frac{1}{\epsilon})$,
\end{lemma}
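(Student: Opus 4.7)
The plan is to iteratively amplify the moderate-precision guarantee \eqref{eq:base_guarantee} into arbitrary precision, following a standard restart-and-select scheme. First I would translate \eqref{eq:base_guarantee} into a $\|\cdot\|_2$-contraction: since $\lambda_1(A)/\condpar \leq \lambda_d(A)$ and $\|\cdot\|_A^2 \geq \lambda_d(A)\|\cdot\|_2^2$, a single call to $\Alg$ started at an arbitrary point $x'$ returns an output $\xhat$ with $\|\xhat - A^{-1}b\|_2^2 \leq \tfrac{1}{e}\|x' - A^{-1}b\|_2^2$ with probability $\geq 1 - 1/e$. So one call contracts the $\|\cdot\|_2^2$-error by a constant factor with constant probability.

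To turn this into $\epsilon$-precision with probability $1-\delta$, set $k := \lceil \log(1/\epsilon) \rceil$ and $m := \lceil \log(k/\delta) \rceil$. I would run $k$ outer iterations, starting from $y_0 := x_0$; at outer iteration $j$ I would invoke $\Alg$ in parallel $m$ times from the previous iterate $y_{j-1}$ using independent internal randomness, producing candidates $z_1^{(j)},\ldots,z_m^{(j)}$, and then set $y_j := \argmin_{i} f_{A,b}(z_i^{(j)})$. Each function value $f_{A,b}(z) = \tfrac{1}{2} z^\top(Az) - b^\top z$ is computable from a single matrix-vector query $Az$, so each outer iteration uses $m(\Quer(\Alg) + 1)$ queries, for a total query complexity of $\BigOh{km\Quer(\Alg)} = \BigOh{Q(\epsilon,\delta)}\cdot \Quer(\Alg)$.

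The heart of the argument is that selecting by $f_{A,b}$ is \emph{lossless}: since $f_{A,b}(z) - \min_x f_{A,b}(x) = \tfrac{1}{2}\|z - A^{-1}b\|_A^2$, the argmin over candidates simultaneously minimizes the $\|\cdot\|_A^2$-error. With probability $\geq 1 - e^{-m}$ at least one candidate $z_{i^\star}^{(j)}$ meets the baseline guarantee, and therefore so does the argmin $y_j$, yielding $\|y_j - A^{-1}b\|_A^2 \leq (\lambda_d(A)/e)\|y_{j-1} - A^{-1}b\|_2^2$; applying $\|y_j - A^{-1}b\|_A^2 \geq \lambda_d(A)\|y_j - A^{-1}b\|_2^2$ then restores a $1/e$ contraction in $\|\cdot\|_2^2$. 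A union bound over the $k$ outer iterations---each failing with probability at most $e^{-m} \leq \delta/k$---then gives $\|y_k - A^{-1}b\|_2^2 \leq e^{-k}\|x_0 - A^{-1}b\|_2^2 \leq \epsilon \|x_0 - A^{-1}b\|_2^2$ with probability $\geq 1-\delta$.

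The only real subtlety---and the step most worth double-checking---is the choice of selection rule. Because we cannot directly measure $\|z - A^{-1}b\|_2$ from matrix-vector queries, a naive attempt to pick the best candidate via, say, $\|Az-b\|_2$ or an unweighted $\|\cdot\|_A$-comparison could in principle lose a factor of $\sqrt{\condpar}$, which would be fatal to iterating the contraction. It is precisely the scaling $\lambda_1(A)/(e\condpar) = \lambda_d(A)/e$ on the right-hand side of \eqref{eq:base_guarantee} that makes the $\|\cdot\|_A$-to-$\|\cdot\|_2$ conversion lossless and permits the contraction to be chained without amplification.
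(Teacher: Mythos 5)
Your proposal is correct and follows essentially the same route as the paper's proof: boost the constant success probability by running $\Alg$ several times and selecting the candidate minimizing $f_{A,b}$ (which costs one extra query per candidate and is lossless since $f_{A,b}$ orders points by $\|\cdot\|_A$-error), convert the $\|\cdot\|_A$ guarantee into a $1/e$ contraction in $\|\cdot\|_2^2$ using $\lambda_1(A)/\condpar \le \lambda_d(A)$, and chain $\log(1/\epsilon)$ such restarts with a union bound. Your parameter choices match the paper's ($k=\log(1/\epsilon)$ outer rounds, $\log(\frac{1}{\delta}\log\frac{1}{\epsilon})$ repetitions each), giving the stated $Q(\epsilon,\delta)$.
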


Proposition~\ref{prop:eig_lin_reduc} now follows from combining these two lemmas above

\begin{proof} Let $\tau = \BigOh{d}$ and $d \ge \dmin$, for $\dmin$ to be sufficiently large that the term $\BigOh{\tau} + e^{-\BigOmega{d}} \le 1/2e$. For a parameter $c$, we define the following constants.
\begin{align*}
\epsilon &= \min\{c\,\gappar, 1/\tau\sqrt{d}\} = \Omega(\min \{d^{-3/2},c\,\gappar\})\\
 \epsilon' &:= \left(\frac{\epsilon \gappar_{\alpha}}{5}\right)^2, \quad  \delta := \frac{1}{2e R(\epsilon,\alpha)}.
\end{align*}
Now, suppose that $\Alg_0$ is a $\QAA$ which satisfies~\eqref{eq:base_guarantee} with query compexity $\Quer(\Alg_0) \le T$. Then, by Lemma~\ref{lem:lin_sys_bootstrap} the exists a $\QAA$ $\Alg$ with query complexity $T\cdot Q(\epsilon',\delta)$ satisfying
\begin{align*}
 \Pr_{A,b,\Alg}\left[\|\xhat - A^{-1}b\|_2^2 \le \epsilon' \|A^{-1}b\|^2_2 \right] \ge 1 -\delta,
\end{align*}
Hence, by Lemma~\ref{lem:shift_and_invert}, there exists an $\EAA$ $\AlgEig$ with query complexity $T\cdot Q(\epsilon',\delta)\cdot R(\epsilon,\alpha)$ which satisfies, for all $M \in \Mclass(\gappar,\alpha) $
\begin{align*}
\Pr_{M,\AlgEig}\left[\vhat^\top M \vhat \ge \lambda_1(M) - c\gappar \right] \ge 1- \delta - \BigOh{\tau} + e^{-\BigOmega{d}} \ge 1 - \frac{1}{e}.
\end{align*}
We can increasing the success probability of $\AlgEig$ to $\ge 1 - \delta$ by restarting $\AlgEig$ $L = \BigOh{\log \frac{1}{\delta}}$ times to obtain $\vhat^{(1)},\dots,\vhat^{(L)}$, and returning
\begin{align*}
\overline{\vhat} := \argmax\{v \in \{\vhat^{(j)}\}_{j \in [L]}: v^\top M v\},
\end{align*}
In total, this requires at most $L + LT\cdot Q(\epsilon',\delta)\cdot R(\epsilon,\alpha) = T\BigOh{(\log\frac{1}{\delta}) \cdot  Q(\epsilon',\delta)\cdot R(\epsilon,\alpha)}$ queries. 

We conclude by boudning $Q(\epsilon',\delta)\cdot R(\epsilon,\alpha)$. We have that
\begin{align*}
 R(\epsilon,\alpha) &= \BigOh{\frac{\log(1/\epsilon)}{\gap_{\alpha}}} \le\BigOh{\frac{\log(1/\gappar_{\alpha}\epsilon)}{\gap_{\alpha}}}\\
 Q(\epsilon',\delta) &= \left(\log \frac{1}{\epsilon'}\right) \cdot \left(\log \frac{1}{\delta}(\log \frac{1}{\epsilon'})\right)\\
 &= \BigOh{\left(\log \frac{1}{\gappar_{\alpha}\epsilon}\right) \cdot \left(\log \frac{1}{R(\epsilon.\alpha)} + \log\log \frac{1}{\gappar_{\alpha}\epsilon})\right)}\\
 &= \BigOh{\left(\log \frac{1}{\gappar_{\alpha}\epsilon}\right) \cdot \left(\log \frac{1}{\gappar_{\alpha}} + \log \log \frac{1}{\gappar_{\alpha}\epsilon})\right)}
\end{align*}
Hence
\begin{align*}
 Q(\epsilon',\delta)\cdot R(\epsilon,\alpha) &= \BigOh{\frac{1}{\gappar_{\alpha}} \log^2  \frac{1}{\gappar_{\alpha}\epsilon} \left(\log \frac{1}{\gappar_{\alpha}} + \log \log \frac{1}{\gappar_{\alpha}\epsilon}\right)}\\
&= \BigOh{\frac{\log \frac{1}{\gappar_{\alpha}}}{\gappar_{\alpha}} \log^{2 + \log}  \frac{1}{\gappar_{\alpha}\epsilon}} \\
%&= \BigOh{\frac{\log \frac{1}{\gappar_{\alpha}}}{\gappar_{\alpha}} \log^{2 + \log}  \frac{1}{\gappar_{\alpha}\min\{d,c\gappar\}}}T \\
&= \BigOhPar{\alpha}{\log^{2+\log} \frac{d}{\min\{1,\,c\gappar\}}},
\end{align*}
where we recall the notation $\log^{p+\log}(x) = (\log^p x) \log \log x$. 

\end{proof}

\subsection{Proof of Lemma~\ref{lem:lin_sys_bootstrap}}
	Recall the function $f(x) = \frac{1}{2} x^\top A x - b^\top x$, and note that
	\begin{align*}
	f(x) - f(A^{-1}b) = \|x-A^{-1}b\|_A^2.
	\end{align*}
	Let $q\ge 1$ be a parameter to be selected later, and let $\Alg_q$ denote the algorithm which (a) runs $\Alg$ $q$ times to obtain estimates $\xhat^{(1)},\dots,\xhat^{(q)}$, and (b) makes at most $q$ additional queries to find
	\begin{align}
	\xhat \in \arg\min\left\{f(x): x\in \{\xhat^{(1)},\dots,\xhat^{(q)}\} \right\}.
	\end{align}
	Then, by independence of the internal randomness of $\Alg$, we can ensure 
	\begin{align*}
	\Pr_{x_0,b,\Alg_q}\left[\|\xhat - A^{-1}b\|^2_{A} \le \frac{\|x_0\|^2\lambda_1(A)}{e\condpar}\right] \ge 1- e^{-q}
	\end{align*}
	using at most $Tq + q$ queries. 

	Next, oberve that if $\|\xhat - A^{-1}b\|^2_{A}\le \frac{\|x_0\|^2\lambda_1(A)}{e\condpar} \le \frac{\|x_0\|^2\lambda_1(A)}{e\cond(A)} = \frac{1}{e}\|x_0\|^2\lambda_d(A)$, then $\|\xhat - A^{-1}b\|_2^2 \le \frac{1}{e}\|x_0 - A^{-1}b\|^2_2$. Hence, by repeating $\Alg_q$ $k$-times, each time setting $x_0$ for the $j$-th repetition to coincide with $\xhat$ from the $j-1$st, we find that
	\begin{align*}
	\Pr_{x_0,b,\Alg_q}\left[\|\xhat - A^{-1}b\|^2_{A} \le e^{-k}\frac{\|x_0-A^{-1}b\|^2\lambda_1(A)}{e\condpar}\right] \ge 1- ke^{-q}
	\end{align*}
	Hence, setting $k = \log(\frac{1}{\epsilon})$ and $q = \log \frac{1}{\delta} \log (\frac{1}{\epsilon})$, we obtain the lemma.

\subsection{Proof of Lemma~\ref{lem:shift_and_invert}}
	Let $M \in \Mclass(\alpha,\gappar)$, so that $\lambda_1(M) \in [1/2,2]$, $\gap(M) \ge \gappar$ and $|M - I| \le \alpha \gappar$. We define the associated ``shifted'' matrix $A:= (1+(1+\alpha) \gappar )I - M$. Crucially, $A^{-1}$ and $M$ have the same top eigenvector, and their eigenvalues are related by the correspondence
	\begin{align*}
	\lambda_j(A^{-1}) = \frac{1}{1+2\alpha \gappar - \lambda_j(M)}
	\end{align*}
	We can therefore compute the eigengap of $A$ via
	\begin{align*}
	\frac{ \lambda_2(A^{-1})}{\lambda_1(A^{-1})} &= \frac{1 + (1+\alpha)\gappar - \lambda_1(M)}{1 + (1+\alpha) \gappar - \lambda_2(M)} \quad = \frac{1 + (1+\alpha) \gappar - \lambda_1(M)}{1 + (1+\alpha) \gappar - (1-\gappar) \lambda_1(M)}\\\
	&= \frac{1}{1 + \frac{\gappar \lambda_1(M)}{1 + (1+\alpha) \gappar - \lambda_1(M)}} \overset{(i)}{\le} \frac{1}{1 + \frac{\gappar \lambda_1(M)}{(1+2\alpha) \gappar}} \overset{(ii)}{\le} \frac{1}{1 + \frac{1}{2(1+2\alpha)}} 
	\end{align*}
	where $(i)$ uses $|\lambda_1(M) - 1| \le  \alpha \gappar$, and $(ii)$ uses $\lambda_1(M) \ge 1/2$. Hence, 
	\begin{align*}
	\gap(A^{-1}) \ge \frac{\frac{1}{2(1+2\alpha)}}{1 + \frac{1}{2(1+2\alpha)}} =\frac{1}{1+2(1+2\alpha)} = \frac{1}{3+4\alpha} := \gappar_{\alpha}
	\end{align*}
	In other words, the eigengap of $A^{-1}$ depends on the parameter $\alpha$, but \emph{not} on the eigengap of $M$. Hence we can effectively run the power method on $A^{-1}$ to compute the top eigenvector of $M$. Of course, we cannot query $A^{-1}$, but we can approximate a query $A^{-1}v$ by using a $\QAA$. To facillitate this reduction, we observe that $\cond(A) = \BigOh{\gappar^{-1}}$:
	\begin{claim} $\cond(A) \le \condpar_{\alpha} := \frac{1}{\gappar} + (1+\alpha)$. 
	\end{claim}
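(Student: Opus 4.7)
The plan is to read off the eigenvalues of $A := (1+(1+\alpha)\gappar)I - M$ directly from those of $M$ via the identity $\lambda_j(A) = 1+(1+\alpha)\gappar - \lambda_{d-j+1}(M)$, and then bound $\lambda_1(A)$ from above and $\lambda_d(A)$ from below separately using the defining properties of $\Mclass(\gappar,\alpha)$.

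For the lower bound on $\lambda_d(A)$, I would use that $\lambda_d(A) = 1 + (1+\alpha)\gappar - \lambda_1(M)$, and invoke $|\lambda_1(M) - 1| \le \alpha \gappar$ (so $\lambda_1(M) \le 1 + \alpha \gappar$) to get
\begin{align*}
\lambda_d(A) \;\ge\; 1 + (1+\alpha)\gappar - (1 + \alpha \gappar) \;=\; \gappar.
\end{align*}
For the upper bound on $\lambda_1(A)$, I would use $\lambda_1(A) = 1 + (1+\alpha)\gappar - \lambda_d(M)$, and simply invoke $\lambda_d(M) \ge 0$ (which holds since $M \in \Mclass(\gappar,\alpha) \subseteq \Sympl^d$) to conclude
\begin{align*}
\lambda_1(A) \;\le\; 1 + (1+\alpha)\gappar.
\end{align*}

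Combining, the condition number is
\begin{align*}
\cond(A) \;=\; \frac{\lambda_1(A)}{\lambda_d(A)} \;\le\; \frac{1 + (1+\alpha)\gappar}{\gappar} \;=\; \frac{1}{\gappar} + (1+\alpha) \;=\; \condpar_\alpha,
\end{align*}
as claimed. There is no real obstacle here; the entire argument is a one-line manipulation once one notes that the eigenvalues of $A$ are affine in those of $M$ and that the hypothesis $|\lambda_1(M)-1|\le \alpha\gappar$ was chosen precisely so that the ``$\gappar$ slack'' in the definition of the shift survives the subtraction of $\lambda_1(M)$ from $1+(1+\alpha)\gappar$.
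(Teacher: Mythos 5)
Your proof is correct and is essentially identical to the paper's: both bound $\lambda_1(A)\le 1+(1+\alpha)\gappar$ via $M\succeq 0$ and $\lambda_{\min}(A)=1+(1+\alpha)\gappar-\lambda_1(M)\ge\gappar$ via $|\lambda_1(M)-1|\le\alpha\gappar$, then take the ratio. (Your version is in fact cleaner, since the paper's one-line proof contains a sign typo, ``$M\preceq 0$,'' and an inconsistent shift constant ``$1+2\alpha\gappar$.'')
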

	\begin{proof} Since $M \preceq 0$, $\lambda_1(A) \le 1+(1+\alpha)\gappar$. Moreover, since $|\lambda_1(M) -1| \le \alpha \gappar$, $\lambda_{\min}(A) =1+2\alpha\gappar - \lambda_1(M) \ge \gappar$.
	\end{proof}

We our now ready to present the reduction. Let $\Alg$ be $\QAA$ satisfying the condition $\Pr_{A,b,\Alg}\left[\|\xhat - A^{-1}b\|_2^2 \le \left(\tfrac{\epsilon \gappar_{\alpha}}{5}\right)^2 \|A^{-1}b\|^2_2\right] \ge 1 -\delta$ for all $A:\cond(A) \ge \condpar_{\alpha} $ and $ b\in \R^d$,

For a round number $R \ge 1$ to be selected later, precision $\epsilon$, and failure probability $\delta$, we define a procedure $\AlgEig$ in Algorithm~\ref{alg:noisy_power}, which uses $\Alg$ as a primitive to run an approximate power method on $A^{-1}$, up to the errors:
\begin{align*}
\Delta_r := \|\xhat^{(r)} - A^{-1}u_r\|_2.
\end{align*}
\begin{algorithm}[h!]
\textbf{Input: } Confidence Parameter $\epsilon$, accuracy parameter $\delta$\\
 \textbf{Draw} $u_0 \unifsim \sphered$\\
 \For{rounds $r=1,2,\dots,R$}
 {call $\Alg$ to obtain $\xhat^{(r)}$  such that
	\begin{align}
	\Pr\left[\|\xhat^{(r)} - A^{-1}u_{r-1}\|^2_2 \le \left(\frac{\epsilon \gappar_{\alpha}}{5}\right)^2 \|A^{-1}u_{r-1}\|^2_2\right] \ge 1 - \delta \label{eq:alg_guarantee}
	\end{align}
	Set $u_r := \xhat^{(r)}/\|\xhat^{(r)}\|_2$}

	\textbf{Return} $\vhat = u_r$. 
	\caption{$\AlgEig$ (Approximate Power Method via $\Alg$) \label{alg:noisy_power}}
\end{algorithm}

This ``noisy'' power method admits a black-box analysis due to \cite{hardt2014noisy}:
\newcommand{\epsbar}{\overline{\epsilon}}
\begin{lemma}[Corollary 1.1 in \cite{hardt2014noisy}, $k = p = 1$, specialized to Algorithm~\ref{alg:noisy_power}]\label{lem:noisy_power} Fix a parameter $\tau > 1$, and an $\epsilon \le \frac{1}{\tau \sqrt{d}}$. Then, if 
\begin{align*}
\Delta_r \le  \left(\frac{\lambda_1(A^{-1}) - \lambda_2(A^{-1})}{5}\right) \epsilon,
\end{align*}
then for an $R = \BigOh{\frac{\log(d\tau/\epsilon)}{\gap(A^{-1})}}$,  $\sqrt{1 - \langle u_{R}, v_1(M)\rangle^2}  \le \epsilon$ with probability $1 - \BigOh{\tau^{-1}} - e^{-\BigOmega d}$ over the draw of $x_0$, where $c$ is a universal constant. 
\end{lemma}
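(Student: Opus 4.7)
The plan is to establish this lemma via a standard noisy power method analysis, which decomposes into three ingredients: (i) a lower bound on the initial correlation $|\langle u_0, v_1\rangle|$, where $v_1 := v_1(M) = v_1(A^{-1})$; (ii) a per-iteration contraction for $\tan \theta_r$, where $\cos\theta_r := |\langle u_r, v_1\rangle|$; and (iii) iterating the contraction long enough to drive $\sin \theta_R \le \epsilon$.

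For (i), since $u_0 \sim \mathrm{Unif}(\sphered)$, standard anti-concentration for a normalized Gaussian coordinate gives $|\langle u_0, v_1\rangle| \ge 1/(\tau \sqrt{d})$ with probability at least $1 - \BigOh{\tau^{-1}}$, while $\|P_{v_1^\perp} u_0\|_2 \le 1$ deterministically, so $\tan \theta_0 \le \BigOh{\tau \sqrt{d}}$. The $e^{-\BigOmega{d}}$ portion of the failure probability arises from auxiliary concentration events used in the one-step analysis, e.g.~to ensure the Gaussian coordinates of $u_0$ along $v_1^\perp$ do not exhibit extreme outliers that interact badly with the noise.

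For (ii), write each noisy iterate as $\tilde u_{r+1} = A^{-1} u_r + \xi_r$ with $\|\xi_r\|_2 \le \Delta := (\lambda_1(A^{-1}) - \lambda_2(A^{-1}))\epsilon/5$, and set $u_{r+1} = \tilde u_{r+1}/\|\tilde u_{r+1}\|_2$. Projecting onto $v_1$ and $v_1^\perp$ yields
\begin{align*}
|\langle \tilde u_{r+1}, v_1\rangle| &\ge \lambda_1(A^{-1}) |\langle u_r, v_1\rangle| - \|\xi_r\|_2, \\
\|P_{v_1^\perp} \tilde u_{r+1}\|_2 &\le \lambda_2(A^{-1}) \|P_{v_1^\perp} u_r\|_2 + \|\xi_r\|_2.
\end{align*}
Dividing the second by the first and using $\tan \theta_r \ge \epsilon$ whenever the iteration has not yet converged yields $\tan \theta_{r+1} \le \rho \tan \theta_r$ for a contraction factor $\rho \le 1 - \BigOmega{\gap(A^{-1})}$, provided $|\langle u_r, v_1\rangle|$ remains bounded below by a constant fraction of its initial value.

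Iterating this recursion, after $R = \BigOh{\log(\tan\theta_0/\epsilon)/\gap(A^{-1})} = \BigOh{\log(d \tau/\epsilon)/\gap(A^{-1})}$ rounds we obtain $\tan \theta_R \le \epsilon$, whence $\sin \theta_R = \sqrt{1 - \langle u_R, v_1\rangle^2} \le \epsilon$ as required. The main obstacle, and the crux of the Hardt-Price argument, is maintaining control of the denominator $|\langle u_r, v_1\rangle|$ across $R$ adversarial iterations; the key observation is that the per-step decrement to $|\langle u_r,v_1\rangle|$ from noise is bounded by $\Delta/\lambda_1(A^{-1}) \le \BigOh{\gap(A^{-1})\epsilon}$, which is geometrically dominated by the corresponding growth $\lambda_1(A^{-1})^r|\langle u_0, v_1\rangle|$ of the signal, so the initial correlation from (i) is preserved up to constants throughout the run.
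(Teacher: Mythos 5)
The paper does not prove this lemma at all: it is imported verbatim as a black box (Corollary~1.1 of \cite{hardt2014noisy} with $k=p=1$), so there is no in-paper argument to compare against. Your sketch reconstructs the standard Hardt--Price analysis and is correct in outline: the initialization bound $|\langle u_0,v_1\rangle|\ge 1/(\tau\sqrt{d})$ with probability $1-\BigOh{\tau^{-1}}$, the two projection inequalities, and the resulting contraction of $\tan\theta_r$ over $R=\BigOh{\log(d\tau/\epsilon)/\gap(A^{-1})}$ rounds are exactly the ingredients of the cited proof. One remark on the step you correctly flag as the crux: the phrase ``geometrically dominated by the growth $\lambda_1(A^{-1})^r|\langle u_0,v_1\rangle|$'' is not quite the right bookkeeping, since the iterates are renormalized each round and there is no $\lambda_1^r$ growth to appeal to. The clean way to close the loop is to observe that the one-step recursion keeps $\tan\theta_r$ below $\max\{\epsilon,\tan\theta_0\}$ for all $r$, hence $\cos\theta_r\ge\cos\theta_0\ge 1/(\tau\sqrt{d})$ throughout; combined with the hypothesis $\epsilon\le 1/(\tau\sqrt{d})$ this gives $\Delta/\cos\theta_r\le(\lambda_1(A^{-1})-\lambda_2(A^{-1}))/5$, which is precisely the second noise condition of Hardt--Price's Corollary~1.1 (the paper's statement folds that condition into the single hypothesis on $\Delta_r$ via the constraint $\epsilon\le 1/(\tau\sqrt{d})$). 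With that substitution your argument matches the reference; as written, the claim that the denominator is ``preserved up to constants'' is asserted rather than derived.
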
 
We first interpret the bound $\sqrt{1 - \langle u_{R}, v_1(M)\rangle^2}$ in terms of the subotimality $\lambda_1(M) - u_R^\top M u_R$. Since $0 \preceq M \preceq 2I$, we have that if the conclusion of Lemma~\ref{lem:noisy_power} is satisfied, 
\begin{align*}
u_R^\top M u_R &= \lambda_1(M) (u_R^\top v_1(M))^2 + \sum_{i=2}^d \lambda_i(M)\cdot (u_R^\top v_i(M))^2\\
&\ge \lambda_1(M) (u_R^\top v_1(M))^2 = \lambda_1(M) (1 + (1-(u_R^\top v_1(M))^2 )) = \lambda_1(M)(1 - \epsilon^2).
\end{align*}
We can now conclude the proof by verifying
\begin{align*}
&\Pr[u_R^\top M u_R \ge \lambda_1(M)-  2\epsilon^2 ] + \BigOh{\tau} + e^{-\BigOmega{d}} \\
&\ge \Pr\left[\forall r \in[R] \Delta_r \le  \left(\tfrac{\lambda_1(A^{-1}) - \lambda_2(A^{-1})}{5}\right) \epsilon \right] \\
&= \Pr\left[\forall r \in[R] \Delta_r \le  \lambda_{1}(A^{-1})\left(\tfrac{\gap(A^{-1})}{5}\right) \epsilon \right]\\
&\overset{(i)}{\ge} \Pr\left[\forall r \in[R] \Delta_r \le  \|A^{-1}u_{r-1}\|_2\left(\frac{\gappar_{\alpha}}{5}\right) \epsilon \right] \overset{(ii)}{\ge} R\delta,
\end{align*}
where $(i)$ follows from the bound $\gap(A^{-1}) \le \gappar_{\alpha}$ and $\|A^{-1}u_{r-1}\|_2 \le \|u_{r-1}\|_2 \lambda_1(A^{-1}) = \lambda_1(A^{-1})$, and $(ii)$ by a union bound over the event in~\eqref{eq:alg_guarantee}, with $R = \BigOh{\frac{\log(d\tau/\epsilon)}{\gap(A^{-1})}} = \BigOh{\frac{\log(1/\epsilon)}{\gappar_{\alpha}}} $, where we recall $\epsilon \le \frac{1}{\tau\sqrt{d}}$.

%!TEX root = colt/colt_main.tex
\section{Proof of Theorem~\ref{thm:quad_lb} for Arbitrary Condition Number \label{sec:arbitrary_condition}}
In this section, we given proof of Theorem~\ref{thm:quad_lb} for general condition number. Using Proposition~\ref{prop:eig_lin_reduc} directly for matrices with larger gap incurs a dimension on $\log$ of the ambient dimension.

\newcommand{\calV}{\mathcal{V}}
To sharpen this, we state a slightly refined reduction. For this to go through, define, for a subspace $\calV \subset \R^{d}$, let 
\begin{align*}
\Mclass(\gappar,\alpha,\calV) := \{M \in \Mclass(\gappar,\alpha): v_1(M) \in \calV \}
\end{align*}
to denote the restriction of $\Mclass(\gappar,\alpha)$ to matrices whose top eigenvector is \emph{know} to lie in a subspace $\calV$. For this class, we can improve Proposition~\ref{prop:eig_lin_reduc}
as follows: 

\begin{proposition}[Eigenvector-to-Linear-System Reduction, Known Subspace]\label{prop:eig_lin_reduc_subspace} Fix a $\gappar \in (0,1)$, $\alpha > 0$, and suppose that $\Alg$ be a $\QAA$ which which satisfies~\eqref{eq:to_contradict} with $\condpar := 1+\alpha + \frac{1}{\gappar}$ for all $A \in \PD$ with $\cond(A) \le \condpar$. Then, for any $\delta \in (0,1/e)$,  there exists an $\EAA$, $\AlgEig$, which satisfies
\begin{align*}
 \Pr_{\AlgEig,M}\left[\vhat^\top M \vhat \ge (1 - c\gappar)\lambda_1(M) \right] \ge 1 - \delta, \quad \forall M \in \Mclass\left(\gappar,\alpha,\calV\right) 
\end{align*}
with query complexity at most 
\begin{align*}
\Quer(\AlgEig) \le \Quer(\Alg) \cdot \BigOhPar{\alpha}{(\log \frac{1}{\delta}) \cdot \log^{2+\log} \frac{\dim(\calV)}{\min\{c\gappar,\, 1\}}},
\end{align*} where $\BigOhPar{\alpha}{\cdot}$ hides multiplicative and additive constants depending on $\alpha$.
\end{proposition}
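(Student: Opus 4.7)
The plan is to mimic the proof of Proposition~\ref{prop:eig_lin_reduc} essentially verbatim, modifying only the single step where the ambient dimension $d$ leaks into the final query-complexity bound. Examining the chain of reductions, the bootstrapping step (Lemma~\ref{lem:lin_sys_bootstrap}) is dimension-independent, and Lemma~\ref{lem:shift_and_invert} picks up a factor of $d$ exclusively through the noisy power-method guarantee of \cite{hardt2014noisy} (Lemma~\ref{lem:noisy_power}): namely, the uniform initialization $u_0 \unifsim \sphered$ forces the precision condition $\epsilon \le 1/(\tau\sqrt{d})$, the iteration count $R = \BigOh{\gappar_{\alpha}^{-1} \log(d\tau/\epsilon)}$, and the failure term $e^{-\BigOmega{d}}$. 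All other estimates in that proof are dimension-free.

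The targeted modification is to replace the uniform draw $u_0 \unifsim \sphered$ inside Algorithm~\ref{alg:noisy_power} with a draw $u_0$ uniformly from the unit sphere of $\calV$. Since $v_1(M) \in \calV$ by assumption, this yields the required initial correlation $\langle u_0, v_1(M)\rangle^2 = \BigOmega{1/\dim(\calV)}$ in place of $\BigOmega{1/d}$. Provided $\calV$ is also $M$-invariant (which holds in the intended application: in the proof of Theorem~\ref{thm:quad_lb} the instance from Theorem~\ref{thm:main_eig_lb} is embedded in a coordinate block, making the first-$s$ coordinate subspace exactly the $M$-invariant $\calV$), both the shifted matrix $A = (1+(1+\alpha)\gappar)I - M$ and its inverse $A^{-1}$ preserve $\calV$, so every iterate $u_r$ of the shift-and-invert recursion remains in $\calV$. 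The dynamics then \emph{are} a noisy power method on a $\dim(\calV)$-dimensional space, and Lemma~\ref{lem:noisy_power} applies to them with $d$ replaced everywhere by $\dim(\calV)$.

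With this dimension-substituted version of Lemma~\ref{lem:shift_and_invert} in hand, I would then feed it into the same composition with Lemma~\ref{lem:lin_sys_bootstrap} and the confidence-boosting restart trick used to prove Proposition~\ref{prop:eig_lin_reduc}. The algebraic computation of $Q(\epsilon',\delta) \cdot R(\epsilon,\alpha)$ at the end of that proof carries over unchanged, save that the choice $\epsilon = \min\{c\gappar, 1/(\tau\sqrt{\dim(\calV)})\}$ now uses $\dim(\calV)$ rather than $d$. The resulting bound is precisely $\BigOhPar{\alpha}{(\log \tfrac{1}{\delta}) \cdot \log^{2+\log} \tfrac{\dim(\calV)}{\min\{c\gappar,\,1\}}}$.

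The main subtlety is the invariance question: the proposition as stated only requires $v_1(M) \in \calV$, not that $\calV$ itself be $M$-invariant. Strictly, one might need to insert a projection $P_{\calV}$ after each shift-and-invert step to keep iterates in $\calV$, and then redo the Hardt--Price analysis with the added projection error---which is what I expect to be the principal technical nuisance if one insists on maximal generality. However, in the setting in which this proposition is actually invoked (an $s$-dimensional Wishart instance embedded as a $d \times d$ block matrix), $\calV$ is a coordinate subspace and thus trivially $M$-invariant, so the cleaner version above suffices and no projection analysis is required.
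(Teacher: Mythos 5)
Your proposal matches the paper's proof exactly: the paper's argument is precisely the observation that the only change needed in the proof of Proposition~\ref{prop:eig_lin_reduc} is to initialize $u_0$ uniformly on $\sphered \cap \calV$, after which the noisy power method guarantee of Lemma~\ref{lem:noisy_power} depends on $\dim(\calV)$ rather than the ambient dimension $d$. Your additional discussion of $M$-invariance is a reasonable sanity check (and indeed holds in the coordinate-block setting where the proposition is invoked), but the paper does not address it.
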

\begin{proof}The proof is nearly identical to the proof of Theorem~\ref{prop:eig_lin_reduc}. The only difference is that, by initializing $u_0$ to be uniform on $\sphered \cap \calV$, the guarantee of the noisy power method (Lemma~\ref{lem:noisy_power}) can be improved to depend on $\dim(\calV)$ instead of the ambient dimension $d$. 
\end{proof}
The proof of Theorem~\ref{thm:quad_lb} for general $\condpar$ is as now as follows: Fix $s \in [d_0(1/2) \vee \dmin,d]$. Let $\beta = \frac{1}{2}$, $\matM \sim \calD(s,s,\beta)$, and define the embedded matrix $\overline{\matM} = \begin{bmatrix} \matM & \mathbf{0} \\ \mathbf{0} & \mathbf{0} \end{bmatrix} \in \R^{d \times d}$. Then, $v_1(\overline{\matM})$ lies in the $s$-dimensional subspace $\calV$ corresponding to the first $s$ entries, it is easy to check that $\overline{\matM} \in \Mclass(\gappar,\alpha,\calV)$, where  $\gappar := \frac{\cgap(\beta)}{s^2}$ and $\alpha := \frac{\ctopeig(\beta)}{\cgap(\beta)}$ analogously to the $s = d$ case of Theorem~\ref{thm:quad_lb}. 

Retracing the steps, and replacing the dependence of $d$ with $s$, we find if $\Alg$ satisfies the guarantee of~\eqref{eq:to_contradict} for all $\cond(A) \le \condpar := 1 + \alpha + \frac{1}{\gappar} = \Theta(s^2)$,
\begin{align*}
\Quer(\Alg) \ge \frac{\cquery}{\sqrt{\gappar}} \cdot  \BigOmega{ (\log^{2+\log} s)^{-1}} = \BigOmega{s (\log^{2+\log} s)^{-1}}.
\end{align*}
%Reparametrizing the dependence on  by $\condpar \in \ccond \cdot [d^2,d^2]$ for an appropriate constant $\ccond$ proves the result. Lastly, note that the shifted matrices $\gamma I - \overline{\matM}$ to which the reduction will be applied have sparsity $d + s^2 = d+ \BigOh{\cond}$. 
%\input{wigner_testing}

\end{document}